\documentclass[11pt,a4paper,logo]{googledeepmind}

\pdftrailerid{redacted}

\usepackage{subcaption}
\usepackage{placeins}
\usepackage{float}
\usepackage{mathtools}
\usepackage[textsize=tiny]{todonotes}

\definecolor{highlightbg}{RGB}{248,249,250}
\definecolor{keyblue}{RGB}{0,82,155}

\usepackage{arxiv_style}
\setarxivlogo{figures/logo.png}
\setarxivlogowidth{90pt}
\setarxivlogoraise{0pt}
\setarxivdatetext{}

\usepackage[numbers,sort&compress]{natbib}
\hypersetup{plainpages=false}

\theoremstyle{plain}
\newtheorem{theorem}{Theorem}[section]
\newtheorem{proposition}[theorem]{Proposition}

\theoremstyle{definition}
\newtheorem{definition}[theorem]{Definition}

\theoremstyle{remark}
\newtheorem{remark}[theorem]{Remark}

\title{Expert-Data Alignment Governs Generation Quality in Decentralized Diffusion Models}
\correspondingauthor{marcos@bagel.com}

\makeatletter
\renewcommand\AB@authnote[1]{}
\renewcommand\AB@affilnote[1]{}
\makeatother

\author{Marcos Villagra}
\author{Bidhan Roy}
\author{Raihan Seraj}
\author{Zhiying Jiang}

\affil{Bagel Labs}
\keywords{Decentralized diffusion, expert ensemble, routing, expert-data alignment, sensitivity, Lipschitz}

\begin{abstract}
Decentralized Diffusion Models (DDMs) route denoising through experts trained independently on disjoint data clusters, which can strongly disagree in their predictions. What governs the quality of generations in such systems? We present the first ever systematic investigation of this question. A priori, the expectation is that minimizing denoising trajectory sensitivity---minimizing how perturbations amplify during sampling---should govern generation quality. We demonstrate this hypothesis is incorrect: a stability–quality dissociation. Full ensemble routing, which combines all expert predictions at each step, achieves the most stable sampling dynamics and best numerical convergence while producing the worst generation quality (FID 47.9 vs. 22.6 for sparse Top-2 routing). Instead, we identify expert-data alignment as the governing principle: generation quality depends on routing inputs to experts whose training distribution covers the current denoising state. Across two distinct DDM systems, we validate expert-data alignment using (i) data-cluster distance analysis, confirming sparse routing selects experts with data clusters closest to the current denoising state, and (ii) per-expert analysis, showing selected experts produce more accurate predictions than non-selected ones, and (iii) expert disagreement analysis, showing quality degrades when experts disagree. For DDM deployment, our findings establish that routing should prioritize expert-data alignment over numerical stability metrics.
\end{abstract}

\begin{document}
\maketitle

%%%%%%%%%%%%%%%%%%%%%%%%%%%%%%%%%%%%%%%%%%%%%%%%%%%%%%%%%%%%%%%%%%%%%%%%%%%%%%%
%%%%%%%%%%%%%%%%%%%%%%%%%%%%%%%%%%%%%%%%%%%%%%%%%%%%%%%%%%%%%%%%%%%%%%%%%%%%%%%
%%%%%%%%%%%%%%%%%%%%%%%%%%%%%%%%%%%%%%%%%%%%%%%%%%%%%%%%%%%%%%%%%%%%%%%%%%%%%%%
\section{Introduction}

Decentralized Diffusion Models (DDMs) \cite{mcallister2025ddm} combine
independently trained diffusion experts \cite{ho2020denoising} via an inference-time router. Because
experts are trained on disjoint data clusters and can strongly disagree in their predictions,
understanding what governs generation quality becomes crucial—yet this question has not been systematically studied.

A natural hypothesis is that numerical stability determines quality \cite{yang2024lipschitz}, that is, routing
strategies that minimize trajectory sensitivity should produce superior samples.
We demonstrate that this hypothesis is incorrect. In this work, we observe that full ensemble routing, in which all expert predictions are combined, achieves
the lowest trajectory sensitivity
and best numerical convergence, yet
produces the worst generation quality. These results rule out trajectory sensitivity as the
primary determinant of generation quality.

We identify \emph{expert-data alignment} (routing inputs to experts trained
on similar data) as the governing principle. When sparse routing (e.g., Top-2)
selects experts whose training distribution covers the current denoising state, each expert
produces coherent velocity predictions that combine meaningfully. Full ensemble
routing, by contrast, forces all experts to process every input; since each
expert is trained on only a subset of the data, most of them are processing
out-of-distribution data at any given time. The averaged velocity field may be
smooth, but it points toward an incoherent compromise rather than the data
manifold.

We provide direct experimental validation of this principle across two distinct DDM systems. Data-cluster distance analysis confirms that sparse routing selects experts with data clusters closest to the input embedding. Per-expert prediction quality analysis
shows that selected experts produce velocity predictions with higher alignment
to the blended output. Expert disagreement analysis demonstrates that
disagreement under full ensemble correlates with quality degradation.

Although numerical stability does not govern quality, understanding when DDM
sampling converges remains valuable. Classical stability analysis suggests DDMs
should fail: Lipschitz constants of deep networks grow exponentially with depth
\cite{fazlyab2019efficient,virmaux2018lipschitz,yang2024lipschitz}, and Gr\"onwall's inequality
implies small perturbations amplify over integration \cite{hairer1993solving}.
We also examine \emph{trajectory-local sensitivity}, denoted $\widehat{L}_{\text{eff}}^{(h)}$,
which formalizes the idea that Jacobian
spectral norms remain bounded along realized sampling paths. Empirically, we observe that trajectories exhibit moderate
sensitivity compared to
worst-case global bounds. While $\widehat{L}_{\text{eff}}^{(h)}$ does not predict quality across routing strategies, it may serve as a within-strategy diagnostic for identifying numerically sensitive samples.

The contributions of this work are as follows.
\begin{itemize}
  \item \textbf{Expert-data alignment principle.} We identify expert-data
        alignment as the
        primary determinant of generation quality in DDMs. We provide direct
        experimental validation via (i) data-cluster distance analysis showing sparse routing selects experts with data clusters closest to the input,
        (ii) per-expert analysis demonstrating that
        selected experts produce superior velocity predictions, and (iii) expert disagreement analysis showing quality degrades when experts disagree.
  \item \textbf{Stability--quality dissociation.} We demonstrate that trajectory
        sensitivity does not govern generation quality: full ensemble routing achieves
        the lowest $\widehat{L}_{\text{eff}}^{(h)}$ and step-refinement disagreement, yet
        produces the worst Fr\'echet Inception Distance (FID) \cite{heusel2017gans}. This rules out numerical stability as the primary
        quality determinant. We additionally explore trajectory-local sensitivity as a within-strategy diagnostic, finding weak predictive power.
\end{itemize}

This paper is organized as follows.
Section~\ref{sec:related-work} reviews related work.
Section~\ref{sec:background} introduces notation and background on DDMs.
Section~\ref{sec:dissociation} establishes the stability--quality dissociation.
Section~\ref{sec:alignment-experiments} identifies expert-data alignment as the
governing principle and provides direct experimental validation.
Section~\ref{sec:sensitivity-analysis} presents trajectory sensitivity analysis and
empirical validation.
Sections~\ref{sec:discussion} and~\ref{sec:conclusions} discuss implications
and conclude.

%%%%%%%%%%%%%%%%%%%%%%%%%%%%%%%%%%%%%%%%%%%%%%%%%%%%%%%%%%%%%%%%%%%%%%%%%%%%%%%
%%%%%%%%%%%%%%%%%%%%%%%%%%%%%%%%%%%%%%%%%%%%%%%%%%%%%%%%%%%%%%%%%%%%%%%%%%%%%%%
%%%%%%%%%%%%%%%%%%%%%%%%%%%%%%%%%%%%%%%%%%%%%%%%%%%%%%%%%%%%%%%%%%%%%%%%%%%%%%%
\section{Related Work}\label{sec:related-work}

\paragraph{Diffusion as ODE/SDE and numerical stability.}
Diffusion sampling involves Lipschitz-constrained ODEs where discretization error affects accuracy.
Sampling can be expressed as a probability-flow Ordinary Differential Equation (ODE) \cite{song2021scorebased}, where Lipschitz constants
and discretization error determine solver accuracy.
Recent work by \citet{tan2025stork} addresses temporal stiffness via a specialized solver called STORK
based on stabilized Runge-Kutta methods for stiff diffusion ODEs. STORK's
stabilized solvers could, in principle, be combined with our approach (using
stable solvers for temporal stiffness while using sparse routing to control spatial
sensitivity). We use the released Euler and Heun solvers to isolate the effect of routing
strategies, but STORK-style solvers may provide additional gains in decentralized settings.

\paragraph{Local Lipschitz analysis.}
Local Lipschitz bounds have been extensively studied in the neural network literature.
\citet{jordan2020exactly} developed methods for exactly computing local Lipschitz constants, enabling input-specific
sensitivity analysis rather than global worst-case bounds.
Our work applies this trajectory-local perspective to diffusion sampling.
The key novelty is not the local Lipschitz concept itself, but
its application to understanding decentralized diffusion dynamics and the discovery
that routing implicitly stabilizes the sampling dynamics.

\paragraph{Decentralized diffusion.}
DDMs show that decentralized experts, when routed, can match a monolithic
diffusion objective \cite{mcallister2025ddm}. Our work investigates why
such combination succeeds despite expert disagreement, providing a
stability-based explanation complementary to the original capacity arguments.

\paragraph{DDM vs.\ traditional Mixture-of-Experts.}
DDMs are \emph{ensembles of independently trained models}, not
traditional Mixture-of-Experts (MoE). In standard MoE architectures
experts are FFN layers within a shared backbone, trained jointly with load
balancing losses, and routed at the token level \cite{shazeer2017outrageously,fedus2021switch}. In DDM, each ``expert'' is
a \emph{complete diffusion model} trained in isolation on a disjoint data
partition (no shared parameters, no gradient communication, no joint
training). Routing occurs at the \emph{input level} (entire noisy images)
rather than token level, and experts are combined only at inference time.
Concurrent work applies traditional MoE architectures \emph{within} diffusion models \cite{fei2024ditmoe,sun2025ecdit,yang2025diffmoe,wang2025expertrace,cheng2025diffmoe,liu2025dsmoe,zheng2025dense2moe}; DDM instead combines complete, independently trained models.
This distinction matters for stability analysis: DDM experts can produce
arbitrarily different outputs for the same input (having never coordinated
during training), whereas MoE experts share a representational backbone
that constrains their disagreement. Our analysis specifically addresses
the stability challenges arising from DDM's decentralized training.

\paragraph{Data-aware routing and expert specialization.}
The importance of matching inputs to appropriately trained experts is
well-established in MoE systems. Sparsely-gated MoE architectures rely on
learned routing to direct inputs to relevant experts \cite{shazeer2017outrageously},
with subsequent work analyzing expert utilization and load balancing
\cite{dai2022stablemoe,zhou2022expertchoice}. In federated learning,
data heterogeneity across clients creates analogous challenges: models
trained on non-IID partitions may produce poor predictions on out-of-distribution
inputs \cite{li2020federated}. Our work provides direct experimental evidence
that this principle governs sample quality in DDMs: sparse routing succeeds
precisely because it maintains alignment between inputs and expert training
distributions.

%%%%%%%%%%%%%%%%%%%%%%%%%%%%%%%%%%%%%%%%%%%%%%%%%%%%%%%%%%%%%%%%%%%%%%%%%%%%%%%
%%%%%%%%%%%%%%%%%%%%%%%%%%%%%%%%%%%%%%%%%%%%%%%%%%%%%%%%%%%%%%%%%%%%%%%%%%%%%%%
%%%%%%%%%%%%%%%%%%%%%%%%%%%%%%%%%%%%%%%%%%%%%%%%%%%%%%%%%%%%%%%%%%%%%%%%%%%%%%%
\section{Background}\label{sec:background}

%%%%%%%%%%%%%%%%%%%%%%%%%%%%%%%%%%%%%%%%%%%%%%%%%%%%%%%%%%%%%%%%%%%%%%%%%%%%%%%
%%%%%%%%%%%%%%%%%%%%%%%%%%%%%%%%%%%%%%%%%%%%%%%%%%%%%%%%%%%%%%%%%%%%%%%%%%%%%%%
%%%%%%%%%%%%%%%%%%%%%%%%%%%%%%%%%%%%%%%%%%%%%%%%%%%%%%%%%%%%%%%%%%%%%%%%%%%%%%%
\subsection{Notation}\label{sec:notation}
For a differentiable scalar function $f:\mathbb{R}^d\times[0,1]\to\mathbb{R}$,
we let $\nabla_x f(x,t)\in\mathbb{R}^d$ denote the gradient with respect to $x$.
We use $J_x f(x,t)\in\mathbb{R}^{d\times d}$ to denote the Jacobian matrix.
We use $\|\cdot\|$ for the matrix spectral norm and to denote the $\ell_2$-norm over $\mathbb{R}^d$.

%%%%%%%%%%%%%%%%%%%%%%%%%%%%%%%%%%%%%%%%%%%%%%%%%%%%%%%%%%%%%%%%%%%%%%%%%%%%%%%
%%%%%%%%%%%%%%%%%%%%%%%%%%%%%%%%%%%%%%%%%%%%%%%%%%%%%%%%%%%%%%%%%%%%%%%%%%%%%%%
%%%%%%%%%%%%%%%%%%%%%%%%%%%%%%%%%%%%%%%%%%%%%%%%%%%%%%%%%%%%%%%%%%%%%%%%%%%%%%%
\subsection{Diffusion sampling}
Diffusion sampling generates data by integrating an ordinary differential equation (ODE)
$\frac{dx_t}{dt} = v_t(x_t)$ from noise to data \cite{lipman2023flow}.
We call $v = \{v_t\}_{t=0}^1$ a \emph{flow} and the solution $\{x_t\}$ a
\emph{trajectory}. The ODE has a unique solution when $v$ is
Lipschitz continuous \cite{coddington1955theory}.

%%%%%%%%%%%%%%%%%%%%%%%%%%%%%%%%%%%%%%%%%%%%%%%%%%%%%%%%%%%%%%%%%%%%%%%%%%%%%%%
%%%%%%%%%%%%%%%%%%%%%%%%%%%%%%%%%%%%%%%%%%%%%%%%%%%%%%%%%%%%%%%%%%%%%%%%%%%%%%%
%%%%%%%%%%%%%%%%%%%%%%%%%%%%%%%%%%%%%%%%%%%%%%%%%%%%%%%%%%%%%%%%%%%%%%%%%%%%%%%
\subsection{Decentralized Diffusion}
Decentralized Diffusion Models (DDMs) \cite{mcallister2025ddm} train $K$
expert diffusion models in complete isolation on disjoint data partitions.
Unlike traditional MoE (which routes tokens to
different FFN layers within a shared backbone), DDM routes entire inputs
to separate full models at each denoising step.

At inference, a lightweight router predicts weights $w_t^{(k)}(x_t)\geq 0$ with $\sum_{k=1}^K w_t^{(k)}(x_t) = 1$
at each denoising step. The routed velocity field is given by
\begin{equation}
  v_t(x_t) = \sum_{k=1}^K w_t^{(k)}(x_t)\, v_t^{(k)}(x_t).
  \label{eq:routing-intro}
\end{equation}
Sampling integrates $\frac{d x_t}{dt} = v_t(x_t)$ from noise $x_0 \sim \mathcal{N}(0,I)$.

The router outputs probabilities $p_t(k|x_t)$ for each expert $k$.
Full ensemble mode uses $w_t^{(k)}(x_t) = p_t(k|x_t)$ for all $k$.
Top-$k$ routing selects the $k$ experts with highest probability and renormalizes their weights \cite{shazeer2017outrageously}. Top-1 selects a single expert: $v_{\text{Top-1}}(x_t)=v_{k^*}(x_t)$ with $k^*=\arg\max_k p_t(k|x_t)$. This yields a piecewise-smooth vector field with non-differentiability on measure-zero switching surfaces.

In this work, when we say that the DDM router converges we mean numerical convergence of the sampler,
formalized below.

\begin{definition}[DDM sampling convergence]
\label{def:ddm-convergence}
Fix trained experts and a router with a routed flow $v=\{v_t\}_{t=0}^1$.
Let $x_1 \sim \mathcal{N}(0,I)$ be an initial condition at time $t=1$ and let $x_t$ denote the ODE solution at
time $t<1$ (note that $t$ is decreasing). Let $\tilde{x}^{(h)}_t$ denote the output of a numerical ODE
solver with step size $h$,
coupled to the same initial noise $x_1$. We say the DDM sampler \emph{converges in probability} if for
every $\varepsilon>0$, we have
$
P \big(\|\tilde{x}^{(h)}_0 - x_0\| > \varepsilon\big)\to 0 \quad \text{as } h\to 0,
$
where the probability is over the initial noise $x_1$. This notion implies convergence in distribution of
$\tilde{x}^{(h)}_0$ to $x_0$.
\end{definition}

%%%%%%%%%%%%%%%%%%%%%%%%%%%%%%%%%%%%%%%%%%%%%%%%%%%%%%%%%%%%%%%%%%%%%%%%%%%%%%%
%%%%%%%%%%%%%%%%%%%%%%%%%%%%%%%%%%%%%%%%%%%%%%%%%%%%%%%%%%%%%%%%%%%%%%%%%%%%%%%
%%%%%%%%%%%%%%%%%%%%%%%%%%%%%%%%%%%%%%%%%%%%%%%%%%%%%%%%%%%%%%%%%%%%%%%%%%%%%%%
\section{The Stability--Quality Dissociation}
\label{sec:dissociation}

A natural hypothesis is that numerical stability governs generation quality in DDMs, that is,
routing strategies that minimize trajectory sensitivity should produce superior
samples. We establish that this hypothesis is incorrect.

Sections \ref{sec:alignment-experiments} and \ref{sec:sensitivity-analysis} will demonstrate the dissociation:
full ensemble routing achieves the
lowest trajectory sensitivity  and lowest
step-refinement disagreement, yet produces the worst
generation quality---even worse generation quality is observed in \cite{jiang2025paris}.
Top-2 achieves the best generation quality despite
higher trajectory sensitivity. This rules out numerical stability as the quality determinant.

%%%%%%%%%%%%%%%%%%%%%%%%%%%%%%%%%%%%%%%%%%%%%%%%%%%%%%%%%%%%%%%%%%%%%%%%%%%%%%%
%%%%%%%%%%%%%%%%%%%%%%%%%%%%%%%%%%%%%%%%%%%%%%%%%%%%%%%%%%%%%%%%%%%%%%%%%%%%%%%
%%%%%%%%%%%%%%%%%%%%%%%%%%%%%%%%%%%%%%%%%%%%%%%%%%%%%%%%%%%%%%%%%%%%%%%%%%%%%%%
\section{Expert-Data Alignment}
\label{sec:alignment-experiments}

The preceding section ruled out numerical stability as the quality determinant. We now establish \emph{expert-data alignment}---routing inputs to experts trained on similar data---as the governing principle and provide direct experimental validation.

%%%%%%%%%%%%%%%%%%%%%%%%%%%%%%%%%%%%%%%%%%%%%%%%%%%%%%%%%%%%%%%%%%%%%%%%%%%%%%%
\subsection{The Expert-Data Alignment Hypothesis}
\label{sec:alignment-hypothesis}

Full ensemble averaging produces a smoother velocity field because:
(1) averaging over all experts reduces variance in velocity predictions,
(2) this variance reduction directly lowers $\|J_x v(x_t, t)\|$,
and (3) smoother trajectories also exhibit lower step-refinement disagreement.
However, this smoothing forces experts to process out-of-distribution data.
Each expert is trained on only small part of the data (one cluster); when all
experts contribute to every input, most of them process data outside their training
distribution. Therefore, the averaged velocity field may be smooth but it points toward
an incoherent compromise rather than the data manifold.

Top-2 routing selects the two experts whose training data most closely matches
the current input, keeping each expert producing coherent velocity predictions
that combine meaningfully. The dominant factor for sample quality is whether
experts process data similar to their training distribution (what we call
\emph{expert-data alignment}) rather than minimizing trajectory sensitivity.

If expert-data alignment governs quality, we expect that
\begin{enumerate}
  \item sparse routing should achieve higher alignment (selected experts have lower
cluster distance) than full ensemble;
  \item selected experts should produce superior velocity predictions compared to
non-selected experts; and,
  \item expert disagreement should correlate with quality degradation under full
ensemble.
\end{enumerate}

%%%%%%%%%%%%%%%%%%%%%%%%%%%%%%%%%%%%%%%%%%%%%%%%%%%%%%%%%%%%%%%%%%%%%%%%%%%%%%%
\subsection{Cluster Distance Analysis}
\label{sec:exp-cluster}

We use the pretrained DDM Paris model \cite{jiang2025paris} via released pretrained checkpoints.
The model consists of $K=8$ experts trained on a subset of LAION-Aesthetics \cite{schuhmann2022laion}.
The dataset was partitioned into 8 semantic
clusters via two-stage hierarchical k-means on DINOv2-ViT-L/14 embeddings. The model
uses a DiT-B/2 router ($\sim$129M parameters) and 8 DiT-XL/2 experts
(a modified version of the DiT-XL/2 experts with $\sim$606M parameters each, $\sim$5B total).
The router was trained
\emph{post-hoc} on the full dataset, effectively learning to route inputs to
the expert trained on the most similar data. We will use Paris DDM to test the Expert-Data Alignment Hypothesis.

Let $\mathcal{C}_k$ denote the training data cluster for expert $k$, and let
$d(x, \mathcal{C}_k)$ denote the Euclidean distance from the DINOv2 embedding
of input $x$ to the centroid of cluster $\mathcal{C}_k$. We define
\emph{high expert-data alignment} as the condition where the selected experts
have low $d(x, \mathcal{C}_k)$ relative to non-selected experts.

We test whether sparse routing selects experts whose training clusters match
the input distribution. For $n=500$ samples, we extract DINOv2-ViT-L/14 embeddings at timesteps
$t \in \{0.3, 0.5, 0.7\}$ during sampling. For each state $(x_t, t)$, we compute:
(1) the Euclidean distance from the embedding to each of the 8 cluster centroids
used during expert training;
(2) the experts selected by each routing strategy at that timestep;
(3) the rank of the selected expert(s)' cluster(s) among all 8 clusters, ordered
by distance.

We emphasize that this embedding distance is used only for relative expert ranking;
all conclusions depend solely on rank comparisons, not metric fidelity. Moreover,
the training clusters themselves were defined via k-means in DINOv2 space,
making this the canonical embedding for cluster proximity.

We report two metrics. (i)  the average rank
(1 = closest, 8 = farthest) of the selected expert's training cluster refered to as \emph{Mean cluster rank}, and
(ii)  the percentage of timesteps where at
least one selected expert's cluster is among the two closest to the current input refered to as \emph{Top-2 Match Rate}.
Table~\ref{tab:cluster-distance} shows the results.

\begin{table}[t]
  \caption{\textbf{Cluster distance analysis validates expert-data alignment.}
  Sparse routing selects experts whose training clusters match the input.
  Lower mean rank indicates better alignment (rank 1 = closest). Results averaged over $n=500$
  samples at $t \in \{0.3, 0.5, 0.7\}$. We also include the results for full ensemble as
  a baseline reference.}
  \label{tab:cluster-distance}
  \centering
  \begin{small}
  \begin{tabular}{@{}lcc@{}}
    \toprule
    \textbf{Routing} & \textbf{Mean Cluster Rank} $\downarrow$ & \textbf{Top-2 Match Rate} $\uparrow$ \\
    \midrule
    Top-1 & 1.54 $\pm$ 0.28 & 90.2\% \\
    Top-2 & 1.96 $\pm$ 0.26 & 83.9\% \\
    Full (8) & 4.50 $\pm$ 0.00 & 25.0\% \\
    \bottomrule
  \end{tabular}
  \end{small}
\end{table}

Top-1 and Top-2 achieve mean cluster ranks of 1.54 and 1.96, far below the 4.5 random baseline,
with Top-2 match rates exceeding 83\%.

%%%%%%%%%%%%%%%%%%%%%%%%%%%%%%%%%%%%%%%%%%%%%%%%%%%%%%%%%%%%%%%%%%%%%%%%%%%%%%%
\subsection{Per-Expert Prediction Quality}
\label{sec:exp-expert-quality}
Now we want to test whether selected experts produce superior velocity predictions compared to non-selected experts.

For $n=200$ samples generated with Top-2 routing, we record at each timestep
(1) the blended velocity $v_{t}(x_t) = \sum_{k \in \mathcal{S}} w_t^{(k)} v_t^{(k)}(x_t)$,
(2) the individual velocity predictions $v_t^{(k)}(x_t)$ for all 8 experts, and
(3) the routing weights and selected set $\mathcal{S}$.

For each expert $k$ at each timestep, we compute the velocity alignment score define as
\[
a^{(k)}(x_t) = \frac{v_t^{(k)}(x_t)^\top \cdot v_{t}(x_t)}{\|v_t^{(k)}(x_t)\| \cdot \|v_{t}(x_t)\|},
\]
where $v_t^{(k)}(x_t)^\top$ denotes the transpose of $v_t^{(k)}(x_t)$.
This cosine similarity measures how well expert $k$'s prediction aligns with
the routed velocity used for successful generation.
We report angular deviation $\theta_k = \arccos(a_k)$ in degrees for interpretability.
See Table~\ref{tab:expert-quality} for the results.

\begin{table}[t]
  \caption{\textbf{Selected experts produce better-aligned predictions.}
  Angular deviation (degrees) from blended velocity for selected versus
  non-selected experts under Top-2 routing. Smaller is better. Results
  averaged over $n=200$ samples. Statistical significance via independent samples t-test.}
  \label{tab:expert-quality}
  \centering
  \begin{small}
  \begin{tabular}{@{}lcc@{}}
    \toprule
    \textbf{Expert Status} & \textbf{Angular Dev.} $\downarrow$ & \textbf{Std Dev} \\
    \midrule
    Selected (Top-2) & 3.6° & $\pm$1° \\
    Non-selected & 5.1° & $\pm$1° \\
    \midrule
    Reduction & 1.5° (29\%) & $p < 0.001$ \\
    \bottomrule
  \end{tabular}
  \end{small}
\end{table}

Selected experts achieve smaller angular deviation from the blended velocity (3.6° vs.\ 5.1°; independent samples t-test, $p < 0.001$),
a 29\% reduction confirming systematic identification of coherent experts. The statistical significance
demonstrates that routing systematically identifies the most coherent experts rather than selecting
arbitrarily.

%%%%%%%%%%%%%%%%%%%%%%%%%%%%%%%%%%%%%%%%%%%%%%%%%%%%%%%%%%%%%%%%%%%%%%%%%%%%%%%
\subsection{Expert Disagreement and Sample Quality}
\label{sec:exp-disagreement}
In this section, we want to test whether expert disagreement explains the poor quality of full ensemble routing.

For $n=500$ samples generated with full ensemble, we measure
(1) mean pairwise expert disagreement
$D(x_t) = \frac{1}{\binom{K}{2}} \sum_{i < j} \|v_i(x_t) - v_j(x_t)\|_2$,
(2) trajectory-integrated disagreement: $D_{\text{int}} = \int_0^1 D(x_t) \, dt$, and
(3) perceptual quality as LPIPS distance \cite{zhang2018unreasonable} to the corresponding Top-2 output
(matched initial noise). See Figure~\ref{fig:disagreement-quality} for the results.

\begin{figure}[t]
  \centering
  \includegraphics[scale=1.3]{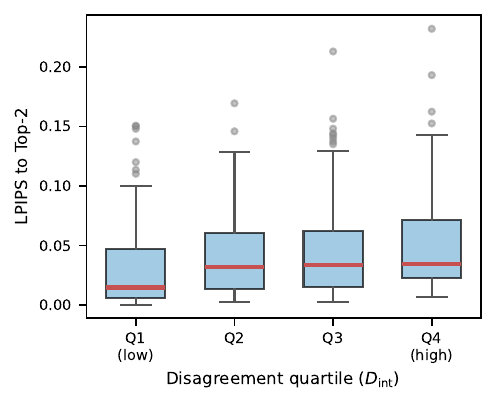}
  \caption{\textbf{Higher expert disagreement degrades perceptual quality in full ensemble routing.}
  Samples binned by trajectory-integrated disagreement (Q1=lowest, Q4=highest).
  Higher disagreement quartiles show greater perceptual distance (LPIPS) from the Top-2 reference,
  explaining why full ensemble underperforms sparse routing.}
  \label{fig:disagreement-quality}
\end{figure}

The monotonic increase in LPIPS across disagreement quartiles confirms
that expert disagreement drives quality degradation in full ensemble
routing.

%%%%%%%%%%%%%%%%%%%%%%%%%%%%%%%%%%%%%%%%%%%%%%%%%%%%%%%%%%%%%%%%%%%%%%%%%%%%%%%
\subsection{MNIST Validation of Expert-Data Alignment}
\label{sec:mnist}

We validate the expert-data alignment hypothesis on a separate MNIST-based DDM with 10 UNet experts and a CNN router.
This controlled setting provides independent confirmation of our main findings on a simpler domain with more specialized experts.

The MNIST DDM consists of 10 independently trained UNet experts (each $\sim$10M parameters) and a lightweight CNN router.
Each expert was trained on a digit-specific subset of MNIST, creating strong expert specialization.
The router was trained separately to predict which expert best matches each input.

We run two experiments mirroring the Paris DDM analysis:
(1) per-expert prediction quality comparing selected vs.\ non-selected experts, and
(2) expert disagreement correlation with quality degradation.
All experiments use the Heun solver with 50 steps and $n=500$ samples per configuration.

\begin{table}[t]
  \caption{\textbf{Selected experts produce better-aligned predictions.}
  Angular deviation (degrees) from blended velocity for selected versus
  non-selected experts under Top-2 routing. Smaller is better. Results
  averaged over $n=500$ samples.}
  \label{tab:mnist-quality}
  \centering
  \begin{small}
  \begin{tabular}{@{}lcc@{}}
    \toprule
    \textbf{Expert Status} & \textbf{Angular Dev.} $\downarrow$ & \textbf{Std Dev} \\
    \midrule
    Selected (Top-2) & 6.4° & $\pm$1° \\
    Non-selected & 11.3° & $\pm$1° \\
    \midrule
    Reduction & 4.9° (43\%) & $p \approx 0$ \\
    \bottomrule
  \end{tabular}
  \end{small}
\end{table}

Table~\ref{tab:mnist-quality} shows that selected experts produce
smaller angular deviation from the blended velocity than non-selected experts.
Selected experts achieve smaller angular deviation from the blended velocity (6.4° vs.\ 11.3°, $p \approx 0$),
a 43\% reduction confirming systematic identification of coherent experts.
The angular deviation gap (4.9°) is substantially larger than Paris DDM (1.5°),
reflecting the stronger specialization of MNIST experts trained on digit-specific subsets.

Under full ensemble routing, we measure the correlation between trajectory-integrated expert disagreement
and output quality degradation (MSE and LPIPS distance to Top-2 reference outputs).
MNIST exhibits a substantially stronger disagreement-quality correlation than Paris DDM.
This stronger correlation confirms that expert disagreement is a robust predictor of quality degradation,
with the effect amplified in settings with stronger expert specialization.

%%%%%%%%%%%%%%%%%%%%%%%%%%%%%%%%%%%%%%%%%%%%%%%%%%%%%%%%%%%%%%%%%%%%%%%%%%%%%%%
\subsection{Summary of Results}

The experiments provide evidence for expert-data alignment as
the governing principle of sample quality in DDMs. Cluster distance analysis
confirms that sparse routing selects experts whose training data matches the
input (Table~\ref{tab:cluster-distance}). Per-expert prediction quality shows
that selected experts produce superior velocity predictions
(Table~\ref{tab:expert-quality}). Expert disagreement analysis explains why
full ensemble fails, that is, averaging across disagreeing experts produces incoherent
velocity fields that point off-manifold (Figure~\ref{fig:disagreement-quality}).
Critically, this correlation provides causal evidence: high disagreement represents
states where alignment naturally breaks down (multiple experts are forced to process
inputs outside their training distribution), and we observe that this misalignment
degrades actual sample quality (LPIPS), not merely step-refinement error.

We also provide independent validation of these findings on a MNIST-based DDM
with 10 specialized expert, where the alignment effects are even
more pronounced.

%%%%%%%%%%%%%%%%%%%%%%%%%%%%%%%%%%%%%%%%%%%%%%%%%%%%%%%%%%%%%%%%%%%%%%%%%%%%%%%
%%%%%%%%%%%%%%%%%%%%%%%%%%%%%%%%%%%%%%%%%%%%%%%%%%%%%%%%%%%%%%%%%%%%%%%%%%%%%%%
%%%%%%%%%%%%%%%%%%%%%%%%%%%%%%%%%%%%%%%%%%%%%%%%%%%%%%%%%%%%%%%%%%%%%%%%%%%%%%%
\section{Trajectory Sensitivity Analysis}
\label{sec:sensitivity-analysis}

Having established that expert-data alignment governs sample quality, we now
analyze trajectory sensitivity to understand when numerical convergence holds
and to develop within-strategy diagnostics. We present a conditional convergence
argument linking trajectory-local sensitivity to DDM convergence, followed by
empirical validation.

%%%%%%%%%%%%%%%%%%%%%%%%%%%%%%%%%%%%%%%%%%%%%%%%%%%%%%%%%%%%%%%%%%%%%%%%%%%%%%%
\subsection{Convergence in Probability under Trajectory-Local Sensitivity}
\label{sec:problem}

We present a conditional argument where sensitivity bounds around a trajectory imply
convergence in probability of the DDM sampler.

\begin{definition}[Trajectory-local sensitivity]\label{def:trajectory-local-sensitivity}
Given a flow $v$ and an initial condition $x_1$, let $\{x_t\}_{t\in[0,1]}$ denote the (exact)
ODE solution for $t\in[0,1]$.
If the solution does not exist over $[0,1]$, we let $L_{\text{eff}}(x_1)=+\infty$. Otherwise,
we define the \emph{effective Lipschitz constant} at $x_1$ as
$
L_{\text{eff}}(x_1) = \sup_{t\in[0,1]} \|J_x v_t(x_t)\|.
$
Here, $\|\cdot\|$ denotes the operator norm induced by the Euclidean norm (i.e., the Jacobian spectral norm).
We call the trajectory \emph{locally stable} if $L_{\text{eff}}(x_1) < \infty$.
\end{definition}

Definition~\ref{def:trajectory-local-sensitivity} introduces $L_{\text{eff}}(x_1)$ as a functional of the exact ODE solution $\{x_t\}$ induced by $(v,x_1)$.
This is not a dynamical claim about where trajectories go, and we do not use it to prove that trajectories enter or remain in low-sensitivity regions.
Instead, boundedness of $L_{\text{eff}}$ is an \emph{assumption} in our numerical convergence argument, which we support empirically via
a numerical approximate proxy $\widehat{L}_{\text{eff}}^{(h)}$ defined below.

\begin{definition}\label{def:empirical-leff}
Given a numerical solver with step size $h$ producing a discrete trajectory $\{\tilde{x}^{(h)}_{t_n}\}_{n=0}^N$,
we define the \emph{empirical effective Lipschitz constant} as
$
\widehat{L}_{\text{eff}}^{(h)}(x_1) := \max_{n\in\{0,\dots,N\}} \|J_x v(\tilde{x}^{(h)}_{t_n},t_n)\|.
$
\end{definition}

The numerical approximation $\widehat{L}_{\text{eff}}^{(h)}$ is a good estimator of $L_{\text{eff}}$ as we
refine the solver (see Appendix \ref{app:approx} for details), and hence,
it can be used as a post-hoc diagnostic.

\begin{remark}[Circularity of the diagnostic]\label{rem:circularity}
Computing $\widehat{L}_{\text{eff}}^{(h)}$ requires the numerical trajectory, which is only available \emph{after} sampling completes. This makes $\widehat{L}_{\text{eff}}^{(h)}$ a retrospective diagnostic rather than an a priori predictor: one cannot use it to decide whether a given initial noise will yield a well-converged sample without first running the sampler. We view this as inherent to trajectory-local analysis and use $\widehat{L}_{\text{eff}}^{(h)}$ primarily for understanding sensitivity differences across routing strategies, not for sample-level prediction.
\end{remark}

\begin{definition}[Sampler sensitivity]\label{def:sampler-sensitivity}
  A sampler is $(L,\delta)$\emph{-trajectory-locally sensitive} if $P_{x_1\sim q}[L_{\text{eff}}(x_1)\le L]\ge 1-\delta$,
  for some noise distribution $q$, initial condition $x_1$, and constants $L$ and $\delta$.
\end{definition}

Consider a one-step numerical ODE solver with step size $h$ (e.g.,
Heun / explicit trapezoidal rule) approximating the initial value
problem $\frac{dx_{t}}{dt} = v_{t}(x_{t})$
on $t \in [0,1]$, with discrete time steps $t_n = 1 - nh$ for $n=0, \dots, N$, where $h=1/N$ and $N$ is the total number of steps. At each step $n$,
the method has a (one-step) local truncation error $\epsilon_n^{\text{local}}$
defined as
$\epsilon_n^{\text{local}}(t_n) = x_{t_{n+1}} - \text{Step}_h(x_{t_n}, t_n)$,
where $\text{Step}_h(x, t)$ denotes the numerical update.
Let $e_N := \|x_{t_N} - \tilde{x}^{(h)}_{t_N}\|$ denote the global solution error at the final step.
A standard Gr\"onwall-based argument shows that on the event $\{L_{\text{eff}}(x_1) \le L\}$,
the global error satisfies $e_N \le C h^p e^{L}$ for constants $C, p > 0$
depending on the solver.
Conditioning on this event and choosing $h$ sufficiently small yields, for any $\varepsilon > 0$,
$
  P\big(e_N > \varepsilon\big)  \le  P\big(L_{\text{eff}}(x_1) > L\big) \le \delta.
$
Full convergence in probability ($\lim_{h\to 0} P(e_N > \varepsilon) = 0$ for all $\varepsilon > 0$)
follows by taking $L \to \infty$ along quantiles of $L_{\text{eff}}$ and
adjusting $h$ accordingly; see Appendices~\ref{app:convergence-proof} for details.

Definitions~\ref{def:trajectory-local-sensitivity}--\ref{def:sampler-sensitivity}
are stated in terms of the exact ODE solution $\{x_t\}$, which is not directly accessible in practice.
Accordingly, our experiments report $\widehat{L}_{\text{eff}}^{(h)}$ (Definition~\ref{def:empirical-leff})
computed along numerical trajectories as a proxy.
This is evidence for sensitivity and it is not, by itself, a guarantee that $L_{\text{eff}}$ is bounded for the exact flow.
All convergence statements below are therefore conditional on the (unobserved)
typical-set of trajectories whose effective Lipschitz constant stays bounded.

%%%%%%%%%%%%%%%%%%%%%%%%%%%%%%%%%%%%%%%%%%%%%%%%%%%%%%%%%%%%%%%%%%%%%%%%%%%%%%%
\subsection{Empirical Validation of the Convergence Argument}
\label{sec:methodology}

We design three experiments to validate the convergence argument, each targeting
one step of the logical chain: (1) bounded local error, (2) bounded error
propagation via trajectory-local sensitivity, and (3) convergence under step-size
refinement. Appendices \ref{app:alignment-details} and \ref{app:sensitivity-details}
provide additional technical details.

%%%%%%%%%%%%%%%%%%%%%%%%%%%%%%%%%%%%%%%%%%%%%%%%%%%%%%%%%%%%%%%%%%%%%%%%%%%%%%%
\subsubsection{Experiment 1: Verifying Bounded Local Discretization Error}
\label{sec:exp-local-error}

The goal of this experiment is to test that the numerical solver introduces bounded local truncation
error at each step.

We measure local truncation error by comparing single-step outputs between Heun's method and a
high-precision reference (Heun with $h/10$ subdivisions).
We compute this at 1000 randomly selected trajectory points across 1,000 samples for each routing strategy.

The following measurements are computed: 1) mean and maximum local error $\epsilon^{\text{local}}$
per routing strategy, 2) scaling with step size at $h \in \{0.02, 0.01, 0.005\}$ for each routing strategy, and
3) distribution of local errors across timesteps and samples for each routing strategy.
Detailed results are reported in Appendix~\ref{app:experiment1}.

%%%%%%%%%%%%%%%%%%%%%%%%%%%%%%%%%%%%%%%%%%%%%%%%%%%%%%%%%%%%%%%%%%%%%%%%%%%%%%%
\subsubsection{Experiment 2: Measuring Trajectory-Local Sensitivity}
\label{sec:exp-leff}
The goal of this experiment is to test that the trajectory exhibits bounded Jacobian spectral norms,
and that this bounds error propagation.
We emphasize that spectral norms quantify worst-case local sensitivity; we do not estimate Jacobian eigenvalues or logarithmic norms along trajectories, and therefore do not directly characterize linear stability of the flow. We treat this as a limitation and leave eigenvalue-based analyses to future work.

For each numerical sampling trajectory $\{\tilde{x}^{(h)}_{t_n}\}_{n=0}^N$ starting from $x_1$
we compute $\widehat{L}_{\text{eff}}^{(h)}(x_1)$. However, computing a Jacobian spectral norm can be time
consuming for latent spaces of high dimension, and hence, we estimate it using the power method.
We repeat the power iteration until the relative change between iterations 9 and 10 is less than $0.5\%$.
For samples with slower convergence, we extend up to 20 iterations.

The following measurements are computed:
1) $\widehat{L}_{\text{eff}}^{(h)}$ distribution per routing strategy including mean and standard deviation,
and 2) temporal profile of $\|J_x v(\tilde{x}^{(h)}_{t_n}, t_n)\|_2$ as a function of $t_n$.
Results are reported in Appendix~\ref{app:experiment2} and Figure~\ref{fig:jacobian-trace}.

\begin{figure}[t]
  \centering
  \includegraphics[scale=0.7]{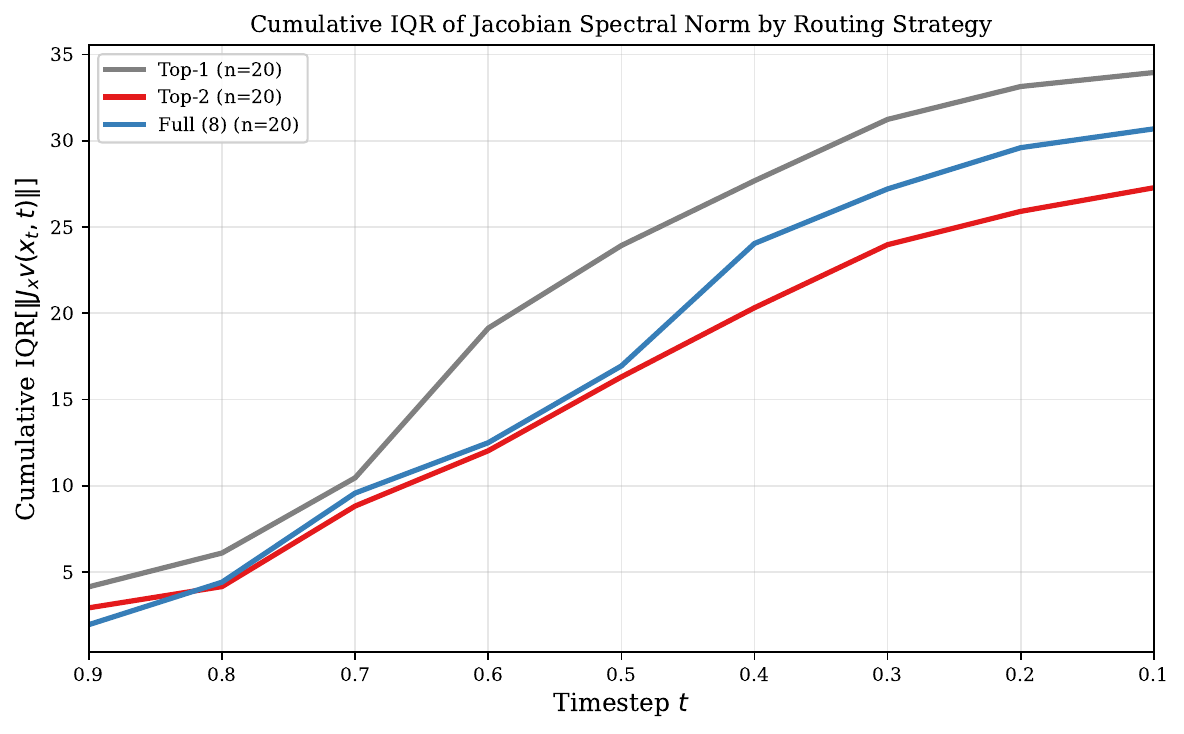}
  \caption{Cumulative IQR of the Jacobian spectral norm $\|\nabla_x v(x_t, t)\|$ as a measure of variability across sampling trajectories.
  The gap between Top-2 and other strategies widens as denoising progresses.
  Mid-trajectory timesteps ($t \in [0.1, 0.9]$); $n{=}20$ samples per strategy.}
  \label{fig:jacobian-trace}
\end{figure}

\begin{table}[t]
  \caption{\textbf{Routing strategy comparison.}
  FID from \cite{jiang2025paris}; $\widehat{L}_{\text{eff}}^{(h)}$ and $\Delta_{\text{refine}}$ measured on $n{=}1000$ samples.
  Full ensemble achieves the lowest trajectory sensitivity and step-refinement disagreement yet produces the worst FID.}
  \label{tab:dissociation}
  \begin{center}
    \begin{small}
      \renewcommand{\arraystretch}{1.1}
      \begin{tabular}{@{}lccc@{}}
        \toprule
        \textbf{Strategy} & \textbf{FID} $\downarrow$ & $\boldsymbol{\widehat{L}_{\textbf{eff}}^{(h)}}$ & $\boldsymbol{\Delta_{\textbf{refine}}}$ $\downarrow$ \\
        \midrule
        \textit{Monolithic} & \textit{29.64} & -- & -- \\
        \midrule
        Top-1 & 30.60 & 18.81 {\scriptsize$\pm$7.15} & 0.075 {\scriptsize$\pm$0.107} \\
        \rowcolor{highlightbg}
        Top-2 & 22.60 & 17.48 {\scriptsize$\pm$6.07} & 0.051 {\scriptsize$\pm$0.070} \\
        Full (8) & 47.89 & 17.07 {\scriptsize$\pm$6.33} & 0.020 {\scriptsize$\pm$0.020} \\
        \bottomrule
      \end{tabular}
    \end{small}
  \end{center}
\end{table}

%%%%%%%%%%%%%%%%%%%%%%%%%%%%%%%%%%%%%%%%%%%%%%%%%%%%%%%%%%%%%%%%%%%%%%%%%%%%%%%
\subsubsection{Experiment 3: Step-Size Refinement and Convergence}
\label{sec:exp-refinement}

The goal of this experiment is to test whether step-size refinement reduces endpoint error, and whether
step-refinement disagreement relates to trajectory-local stability.

For each initial noise $x_1$, we generate samples at $N$ steps (step size $h = 1/N$) and $2N$ steps (step size $h/2$).
We measure the step-refinement disagreement defined as
\begin{equation}\label{eq:delta-refine}
\Delta_{\text{refine}}(x_1) := \text{LPIPS}\big(D(\tilde{x}_0^{(h)}), D(\tilde{x}_0^{(h/2)})\big),
\end{equation}
where $D(\cdot)$ denotes VAE decoding, LPIPS  measures perceptual distance in decoded image space \cite{zhang2018unreasonable},
and $x_1$ is the initial noise.
The metric $\Delta_{\text{refine}}$ is an a posteriori refinement-based error indicator that does not involve Jacobian computation, providing a measure of numerical integration error that avoids methodological coupling with $L_{\text{eff}}$.

The following measurements are computed:
1) $\Delta_{\text{refine}}$ distribution per routing strategy,
2) Spearman correlation coefficient $\rho(\widehat{L}_{\text{eff}}^{(h)}, \Delta_{\text{refine}})$, and
3) area under the curve (AUC) for predicting high-$\Delta_{\text{refine}}$
events using $\widehat{L}_{\text{eff}}^{(h)}$ to measure predictive performance.
Results are reported in Figure~\ref{fig:leff-vs-refine}; see Appendix~\ref{app:experiment3} for methodology details.

%%%%%%%%%%%%%%%%%%%%%%%%%%%%%%%%%%%%%%%%%%%%%%%%%%%%%%%%%%%%%%%%%%%%%%%%%%%%%%%
\subsection{Summary of Results}
Table~\ref{tab:dissociation} shows a summary of the observed results.
Across all routing strategies, correlation between $\widehat{L}_{\text{eff}}^{(h)}$
and step-refinement disagreement $\Delta_{\text{refine}}$
is low ($\rho < 0.08$; Figure~\ref{fig:leff-vs-refine})\footnote{Preliminary analysis of the router
Jacobian term $|\sum_k v_k \nabla_x w_k|$ also shows similarly low correlation
with $\Delta_{\text{refine}}$ (Spearman $\rho = -0.07$, $p = 0.62$ for Top-2 routing, $n{=}50$),
reinforcing that Jacobian-based sensitivity metrics are not aligned with discretization error.}.
This further supports our finding that numerical stability metrics do not govern generation quality, aligning with the observed dissociation between sensitivity and FID.
Factors beyond worst-case Jacobian norms---such as directional alignment of perturbations with the
flow or cancellation effects across timesteps---likely govern actual error accumulation.

\begin{figure}[t]
  \centering
  \includegraphics[width=\columnwidth]{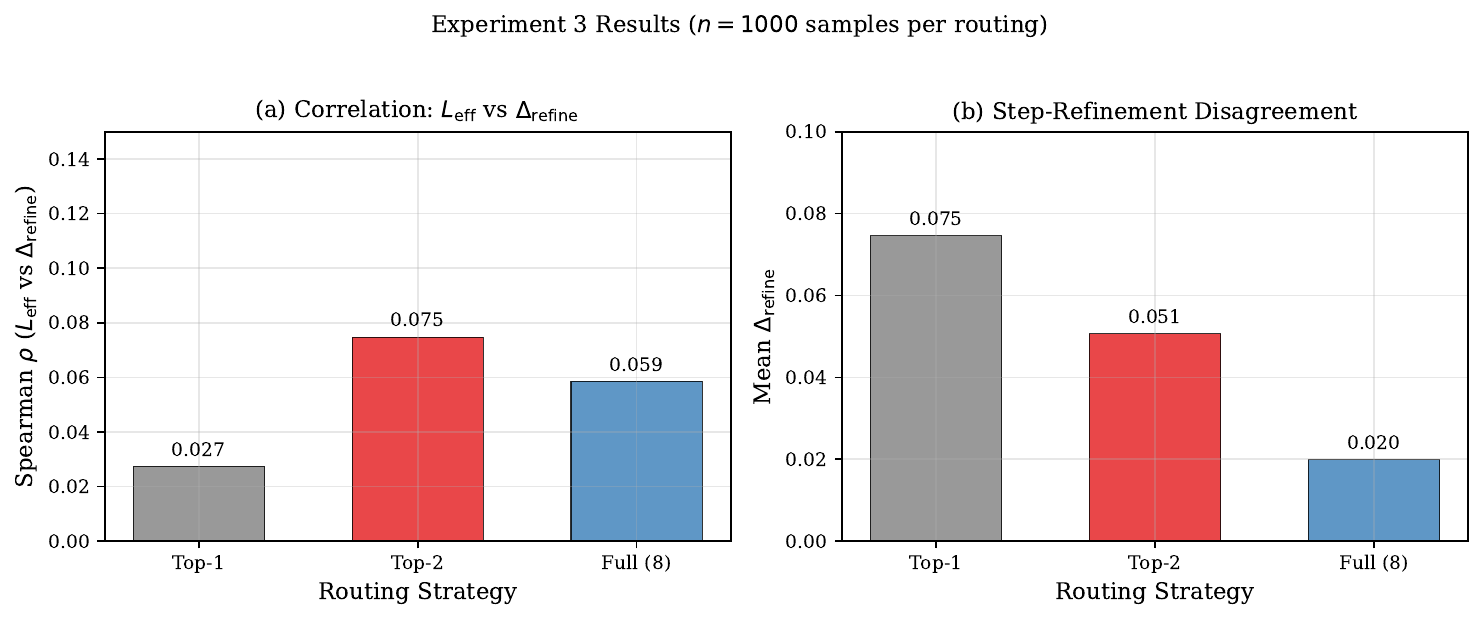}
  \caption{Correlation between trajectory sensitivity and step-refinement disagreement from Experiment 3 ($n{=}1000$ samples per routing strategy).
  \textbf{(a)}~Spearman correlation $\rho(\widehat{L}_{\text{eff}}^{(h)}, \Delta_{\text{refine}})$ is weak across all strategies ($\rho < 0.08$), indicating that $L_{\text{eff}}$ is not a tight predictor of discretization error.
  \textbf{(b)}~Mean step-refinement disagreement $\Delta_{\text{refine}}$ shows clear ordering: Full ensemble achieves the lowest discretization error (0.020), followed by Top-2 (0.051) and Top-1 (0.075).}
  \label{fig:leff-vs-refine}
\end{figure}

%%%%%%%%%%%%%%%%%%%%%%%%%%%%%%%%%%%%%%%%%%%%%%%%%%%%%%%%%%%%%%%%%%%%%%%%%%%%%%%
%%%%%%%%%%%%%%%%%%%%%%%%%%%%%%%%%%%%%%%%%%%%%%%%%%%%%%%%%%%%%%%%%%%%%%%%%%%%%%%
%%%%%%%%%%%%%%%%%%%%%%%%%%%%%%%%%%%%%%%%%%%%%%%%%%%%%%%%%%%%%%%%%%%%%%%%%%%%%%%
\section{Discussion}
\label{sec:discussion}
The experimental results reveal a fundamental tradeoff: strategies that maximize numerical stability
(full ensemble) necessarily sacrifice expert-data alignment, and vice versa.
Cluster distance analysis quantifies this directly---sparse routing achieves mean cluster
ranks of 1.54--1.96 versus 4.50 for full ensemble (Table~\ref{tab:cluster-distance})---while per-expert
analysis shows selected experts produce 29\% lower angular deviation from the blended velocity (Table~\ref{tab:expert-quality}).
The disagreement-quality correlation provides shows that when alignment
breaks down, then sample quality degrades proportionally.

Furthermore, we saw that $L_{\text{eff}}$  is not a cross-strategy quality predictor,
but remains useful for within-strategy diagnostics
to identify numerically sensitive trajectories. The weak correlation
$\rho(L_{\text{eff}}, \Delta_{\text{refine}}) \approx 0.03$--$0.08$ suggests factors
beyond worst-case Jacobian norms
affect actual error accumulation.
The step-refinement ordering confirms that
smoother velocity fields yield better numerical convergence, yet this convergence advantage does not
translate to better generation quality.

%%%%%%%%%%%%%%%%%%%%%%%%%%%%%%%%%%%%%%%%%%%%%%%%%%%%%%%%%%%%%%%%%%%%%%%%%%%%%%%
%%%%%%%%%%%%%%%%%%%%%%%%%%%%%%%%%%%%%%%%%%%%%%%%%%%%%%%%%%%%%%%%%%%%%%%%%%%%%%%
%%%%%%%%%%%%%%%%%%%%%%%%%%%%%%%%%%%%%%%%%%%%%%%%%%%%%%%%%%%%%%%%%%%%%%%%%%%%%%%
\section{Conclusions}\label{sec:conclusions}

We investigated what governs generation quality in Decentralized Diffusion Models
(DDMs), where independently trained experts are combined via inference-time
routing.

Our central finding is that \textbf{expert-data alignment governs generation
quality}: routing inputs to experts trained on similar data is the primary
determinant of quality, not numerical stability. We provide direct
experimental validation through cluster distance analysis (showing sparse
routing selects in-distribution experts), per-expert prediction quality
(showing selected experts produce superior velocities), and disagreement
analysis (explaining why full ensemble fails).

For practitioners, our findings demonstrate that when deploying DDMs with
independently trained experts, routing that maintains expert-data alignment is
more important than optimizing for numerical stability metrics. Future work
should explore training objectives that improve expert robustness to
out-of-distribution inputs. In Appendix~\ref{app:limitations} we discuss limitations and future directions.

%%%%%%%%%%%%%%%%%%%%%%%%%%%%%%%%%%%%%%%%%%%%%%%%%%%%%%%%%%%%%%%%%%%%%%%%%%%%%%%
%%%%%%%%%%%%%%%%%%%%%%%%%%%%%%%%%%%%%%%%%%%%%%%%%%%%%%%%%%%%%%%%%%%%%%%%%%%%%%%
%%%%%%%%%%%%%%%%%%%%%%%%%%%%%%%%%%%%%%%%%%%%%%%%%%%%%%%%%%%%%%%%%%%%%%%%%%%%%%%
\section*{Broader Impact}

This work analyzes routing strategies in Decentralized Diffusion Models, with implications for computational efficiency and scientific understanding of expert ensemble systems.

\paragraph{Computational Impact.} Our findings demonstrate that sparse routing strategies (e.g., Top-2) achieve superior generation quality compared to full ensemble averaging while requiring 4$\times$ fewer active experts at inference time. This efficiency gain may reduce computational costs and energy consumption in deployed generative systems.

\paragraph{Exploratory Analysis.} We examined trajectory sensitivity ($\widehat{L}_{\text{eff}}^{(h)}$) as a potential diagnostic tool. While correlation with step-refinement error was weak, the analysis methodology may inform future work on identifying numerically sensitive samples in decentralized expert architectures.

\paragraph{Limitations.} Our analysis uses pretrained models on LAION-Aesthetics, which contains known demographic biases. We perform inference-only analysis without introducing new generative capabilities. No large-scale training is conducted.

\bibliography{references}
\bibliographystyle{icml2026}

\appendix

\section*{Appendix}

%%%%%%%%%%%%%%%%%%%%%%%%%%%%%%%%%%%%%%%%%%%%%%%%%%%%%%%%%%%%%%%%%%%%%%%%%%%%%%%
%%%%%%%%%%%%%%%%%%%%%%%%%%%%%%%%%%%%%%%%%%%%%%%%%%%%%%%%%%%%%%%%%%%%%%%%%%%%%%%
%%%%%%%%%%%%%%%%%%%%%%%%%%%%%%%%%%%%%%%%%%%%%%%%%%%%%%%%%%%%%%%%%%%%%%%%%%%%%%%
\section{When \texorpdfstring{$\widehat{L}_{\text{eff}}^{(h)}$ approximates $L_{\text{eff}}$}{Lhat eff (h) approximates L eff}}\label{app:approx}
Assume the exact ODE solution $\{x_t\}_{t\in[0,1]}$ exists and remains in a set $K$, and that for all $t\in[0,1]$,
$J_x v(\cdot,t)$ is $L_J$-Lipschitz on $K$, that is,
\[
\|J_x v(x,t)-J_x v(y,t)\|\le L_J\|x-y\|
\]
for all $x,y\in K$.
Let $\tilde{x}^{(h)}(\cdot)$ be any continuous-time interpolation of the numerical trajectory such that
$\sup_{t\in[0,1]}\|x_t-\tilde{x}^{(h)}(t)\|\le \eta(h)$ with $\eta(h)\to 0$ as $h\to 0$, and define
\[
\widehat{L}_{\text{eff,cont}}^{(h)}(x_1) := \sup_{t\in[0,1]} \|J_x v(\tilde{x}^{(h)}(t),t)\|.
\]
Then
\[
\big|L_{\text{eff}}(x_1)-\widehat{L}_{\text{eff,cont}}^{(h)}(x_1)\big|\le L_J\,\eta(h),
\]
so $\widehat{L}_{\text{eff,cont}}^{(h)}(x_1)\to L_{\text{eff}}(x_1)$ as $h\to 0$.
Moreover, if $t\mapsto \|J_x v(\tilde{x}^{(h)}(t),t)\|$ is $L_t$-Lipschitz on $[0,1]$, then the grid maximum
in Definition~\ref{def:empirical-leff} satisfies
\[
0\le \widehat{L}_{\text{eff,cont}}^{(h)}(x_1)-\widehat{L}_{\text{eff}}^{(h)}(x_1)\le L_t h,
\]
so $\widehat{L}_{\text{eff}}^{(h)}(x_1)$ is a consistent proxy for $L_{\text{eff}}(x_1)$ under refinement.

This shows that the gap between this grid maximum and a continuous-time supremum is $O(h)$.

%%%%%%%%%%%%%%%%%%%%%%%%%%%%%%%%%%%%%%%%%%%%%%%%%%%%%%%%%%%%%%%%%%%%%%%%%%%%%%%
%%%%%%%%%%%%%%%%%%%%%%%%%%%%%%%%%%%%%%%%%%%%%%%%%%%%%%%%%%%%%%%%%%%%%%%%%%%%%%%
%%%%%%%%%%%%%%%%%%%%%%%%%%%%%%%%%%%%%%%%%%%%%%%%%%%%%%%%%%%%%%%%%%%%%%%%%%%%%%%
\section{Probabilistic Convergence under Empirical Stability}
\label{app:convergence-proof}

This appendix provides the formal convergence argument sketched in Section~\ref{sec:problem}.
We use a standard conditioning approach combined with deterministic ODE error bounds.

\begin{proposition}[Conditional convergence]\label{prop:conditional-convergence}
Let $v$ be a velocity field and $q$ a noise distribution.
Suppose that for $x_1 \sim q$, the effective Lipschitz constant $L_{\emph{eff}}(x_1)$ (Definition~\ref{def:trajectory-local-sensitivity}) satisfies $P(L_{\emph{eff}}(x_1) < \infty) = 1$.
Let $\tilde{x}^{(h)}_{t_N}$ denote the numerical solution at the final step using step size $h$, and let $e_N := \|x_{t_N} - \tilde{x}^{(h)}_{t_N}\|$ be the global error.
Then for any $\varepsilon > 0$,
\[
\lim_{h \to 0} P(e_N > \varepsilon) = 0.
\]
\end{proposition}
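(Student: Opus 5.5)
The plan is a conditioning argument: a deterministic Grönwall error bound on the event $\{L_{\text{eff}}(x_1)\le L\}$, followed by letting $L\to\infty$ along a sequence. Fix $\varepsilon>0$ and $\eta>0$. Since $P(L_{\text{eff}}(x_1)<\infty)=1$, continuity of measure along the increasing events $\{L_{\text{eff}}\le L\}$ gives some finite $L=L(\eta)$ with $P(L_{\text{eff}}(x_1)>L)\le\eta$.

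It then suffices to show that there is $h_0(L,\varepsilon)$ such that, for all $h\le h_0$ and every $x_1$ in the good event $G_L:=\{L_{\text{eff}}(x_1)\le L\}$, we have $e_N\le\varepsilon$. Granting this,
\[
P(e_N>\varepsilon)\le P(G_L^c)+P(\{e_N>\varepsilon\}\cap G_L)\le \eta
\]
for all $h\le h_0$. Hence $\limsup_{h\to0}P(e_N>\varepsilon)\le\eta$, and letting $\eta\downarrow0$ finishes the proof.

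The deterministic step is the standard one-step-method error recursion. Write $e_n=\|x_{t_n}-\tilde x^{(h)}_{t_n}\|$ and split
\[
x_{t_{n+1}}-\tilde x_{t_{n+1}}=\epsilon^{\text{local}}_n+\bigl(\text{Step}_h(x_{t_n},t_n)-\text{Step}_h(\tilde x_{t_n},t_n)\bigr).
\]
If $\text{Step}_h$ is Lipschitz in $x$ with constant $1+hL'$ (for Heun, $L'=L+hL^2/2$) on the relevant region, we get $e_{n+1}\le(1+hL')e_n+|\epsilon^{\text{local}}_n|$. If in addition $|\epsilon^{\text{local}}_n|\le C h^{p+1}$, the discrete Grönwall inequality yields $e_N\le C h^p e^{L'}/L'$, which is below $\varepsilon$ for $h$ small.

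The main obstacle is that $L_{\text{eff}}$ controls $\|J_xv_t\|$ only \emph{on} the exact trajectory. The numerical iterates, and the Heun stage point, lie off it, and the local truncation constant $C$ needs higher derivatives. I would handle this with a tube argument, which requires strengthening the hypothesis slightly (this is implicit in the paper's sketch). Assume $v$ is $C^2$ in $x$ and continuous in $t$. Then by compactness of the exact trajectory $\{x_t\}_{t\in[0,1]}$ and continuity of $J_xv$, there is a radius $r=r(x_1)>0$ such that on the tube $T_r=\{(y,t):\|y-x_t\|\le r\}$ we have $\|J_xv_t(y)\|\le L_{\text{eff}}(x_1)+1$ and second derivatives bounded by some $M(x_1)$.

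An induction then shows that while $e_n\le r/2$ all Heun stages stay in $T_r$, so the recursion holds with constants depending on $(L,M)$. Choosing $h$ so that the resulting bound is below $\min(\varepsilon,r/2)$ closes the induction.

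Because $r$ and $M$ depend on $x_1$ and not only on $L$, the good event should be refined to $G=\{L_{\text{eff}}\le L,\ r(x_1)\ge r_0,\ M(x_1)\le M_0\}$. Since $r>0$ and $M<\infty$ almost surely on $\{L_{\text{eff}}<\infty\}$, the same continuity-of-measure choice makes $P(G^c)\le\eta$. The rest of the argument is unchanged, with $h_0$ now depending on $(L,r_0,M_0,\varepsilon)$.
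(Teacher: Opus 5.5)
Your proof is correct and follows the same skeleton as the paper's: condition on $A_L=\{L_{\text{eff}}(x_1)\le L\}$, use a deterministic discrete Gr\"onwall bound to get $e_N\le\varepsilon$ on the good event once $h$ is small, pick $L$ by continuity of measure so that $P(A_L^c)<\eta$, and let $\eta\downarrow 0$.

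The real difference is in Step 1. The paper simply asserts $e_N\le Ch^pe^{L}$ on $A_L$, with $h^*(\varepsilon,L)=(\varepsilon/(Ce^L))^{1/p}$ independent of $x_1$. This tacitly treats a bound on $\|J_xv_t\|$ along the exact trajectory as a Lipschitz bound at the numerical iterates and Heun stage points. It also treats the truncation constant $C$ as uniform over $A_L$. Neither follows from the bare hypothesis on $v$. Your tube argument, the added regularity, and the refined event $G$ are what make that step valid. Your version therefore buys rigor at the price of assumptions the statement does not list. These assumptions are not satisfied by the hard Top-$k$ routed fields, which are discontinuous across switching surfaces.

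Two refinements are worth making.

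\begin{enumerate}
\item $C^2$ in $x$ with mere continuity in $t$ does not give $|\epsilon^{\text{local}}_n|\le Ch^{p+1}$ for Heun, since that needs time derivatives of $v$. It does give $|\epsilon^{\text{local}}_n|\le h\,\omega(h)$ with $\omega(h)\to 0$ on the compact tube. That already yields $e_N\to 0$ through the same recursion, just without a rate.

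\item Under your regularity, $L_{\text{eff}}(x_1)<\infty$ is automatic whenever the exact solution exists on $[0,1]$. Since $h=1/N$ runs along a sequence, you can drop the uniform event $G$ altogether. The tube argument gives $e_N(x_1)\to 0$ for almost every $x_1$, and dominated convergence applied to $\mathbf{1}\{e_N>\varepsilon\}$ then yields $P(e_N>\varepsilon)\to 0$. This sidesteps the measurability of $r(x_1)$ and $M(x_1)$, and the need for a uniform modulus of continuity.
\end{enumerate}
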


\begin{proof}
Fix $\varepsilon > 0$ and $\eta > 0$.
Define the events $A_L := \{L_{\text{eff}}(x_1) \le L\}$ and $E := \{e_N > \varepsilon\}$.
By the law of total probability,
\[
P(E) \le P(E \mid A_L) + P(A_L^c).
\]

\emph{Step 1 (Deterministic bound on $A_L$).}
Standard Gr\"onwall-based global error analysis for one-step methods (see, e.g., \citet{hairer1993solving}) shows that if the velocity field has Lipschitz constant $L$ along the trajectory, then the global error satisfies
\[
e_N \le C h^p e^{L}
\]
for constants $C > 0$ and $p \ge 1$ depending on the solver order and local truncation error bounds.
On the event $A_L$, the trajectory-local Lipschitz constant is bounded by $L$, so this deterministic bound applies.
Choosing $h^* = h^*(\varepsilon, L) := (\varepsilon / (C e^L))^{1/p}$ ensures $e_N \le \varepsilon$ on $A_L$ for all $h \le h^*$.
Thus $P(E \mid A_L) = 0$ for $h \le h^*$.

\emph{Step 2 (Choosing $L$).}
Since $P(L_{\text{eff}}(x_1) < \infty) = 1$, for any $\eta > 0$ there exists $L = L(\eta)$ such that $P(A_L^c) = P(L_{\text{eff}}(x_1) > L) < \eta$.

\emph{Step 3 (Two-parameter limit).}
Given $\varepsilon, \eta > 0$:
(i) choose $L = L(\eta)$ so that $P(A_L^c) < \eta$;
(ii) choose $h \le h^*(\varepsilon, L)$ so that $P(E \mid A_L) = 0$.
Then $P(E) < \eta$.
Since $\eta > 0$ was arbitrary, $\lim_{h \to 0} P(e_N > \varepsilon) = 0$.
\end{proof}

\begin{remark}
The key assumption is $P(L_{\text{eff}}(x_1) < \infty) = 1$, i.e., that almost all trajectories have finite effective Lipschitz constant.
This is an empirical regularity condition that we validate by measuring $\widehat{L}_{\text{eff}}^{(h)}$ along numerical trajectories.
The $(L, \delta)$-trajectory-locally sensitive condition (Definition~\ref{def:sampler-sensitivity}) connects directly to this result:
since the definition requires the bound to hold for \emph{some} $L$ and $\delta$, we are free to choose any $\delta > 0$ and set $L = L(\delta)$ as the corresponding quantile of $L_{\text{eff}}$.
For this $(L(\delta), \delta)$ pair and sufficiently small $h = h(\varepsilon, L(\delta))$, we obtain $P(e_N > \varepsilon) \le \delta$.
Since $\delta$ can be made arbitrarily small, this recovers full convergence in probability.
\end{remark}

%%%%%%%%%%%%%%%%%%%%%%%%%%%%%%%%%%%%%%%%%%%%%%%%%%%%%%%%%%%%%%%%%%%%%%%%%%%%%%%
%%%%%%%%%%%%%%%%%%%%%%%%%%%%%%%%%%%%%%%%%%%%%%%%%%%%%%%%%%%%%%%%%%%%%%%%%%%%%%%
%%%%%%%%%%%%%%%%%%%%%%%%%%%%%%%%%%%%%%%%%%%%%%%%%%%%%%%%%%%%%%%%%%%%%%%%%%%%%%%
\section{Expert-Data Alignment Experiment Details}
\label{app:alignment-details}

This section provides implementation details for the alignment experiments
described in Section~\ref{sec:alignment-experiments}.

\paragraph{DINOv2 embedding extraction.}
We use DINOv2-ViT-L/14 \cite{oquab2023dinov2} to extract embeddings for both
training data cluster centroids and intermediate sampling states. For
intermediate states $x_t$ during sampling, we first decode through the VAE
to obtain pixel-space images, then extract DINOv2 embeddings. The 8 cluster
centroids were computed during DDM training using hierarchical k-means on
DINOv2 embeddings of the training set.

\paragraph{Cluster distance computation.}
For each routing decision at timestep $t$, we compute the Euclidean distance
from the current sample's DINOv2 embedding to each of the 8 cluster centroids.
The cluster rank is determined by sorting these distances (rank 1 = closest).
For Top-$k$ routing, we report the minimum rank among selected experts.

\paragraph{Velocity alignment computation.}
For each expert $k$ at each timestep, we compute the velocity prediction
$v_k(x_t)$ and measure its cosine similarity with the blended velocity
$v_{\text{blend}}(x_t) = \sum_{j \in \mathcal{S}} w_j v_j(x_t)$. This requires
evaluating all 8 experts at each recorded timestep, increasing computational
cost by approximately $4\times$ compared to standard Top-2 sampling.

\paragraph{Statistical testing.}
Differences between selected and non-selected expert alignment scores are
tested using paired $t$-tests, pairing by sample and timestep. Correlations
are reported as Spearman's $\rho$ with two-tailed $p$-values.

%%%%%%%%%%%%%%%%%%%%%%%%%%%%%%%%%%%%%%%%%%%%%%%%%%%%%%%%%%%%%%%%%%%%%%%%%%%%%%%
%%%%%%%%%%%%%%%%%%%%%%%%%%%%%%%%%%%%%%%%%%%%%%%%%%%%%%%%%%%%%%%%%%%%%%%%%%%%%%%
%%%%%%%%%%%%%%%%%%%%%%%%%%%%%%%%%%%%%%%%%%%%%%%%%%%%%%%%%%%%%%%%%%%%%%%%%%%%%%%
\section{Sensitivity Analysis Details}
\label{app:sensitivity-details}

This appendix provides additional details for the trajectory sensitivity analysis in Section~\ref{sec:sensitivity-analysis}.

Table~\ref{tab:decomp} reveals that the router term $\sum_k v_t^{(k)} \nabla_x w_t^{(k)}$ dominates the expert term by 2--4 orders of magnitude across all routing strategies.
This dominance reflects the inherent sensitivity of softmax routing to input perturbations, a property shared by Top-2 and full ensembling alike.
However, when \emph{comparing} routing strategies, the router term provides limited discriminative signal: it is uniformly large regardless of how many experts are selected.
The expert term $\sum_k w_t^{(k)} \nabla_x v_t^{(k)}$, by contrast, captures how the \emph{weighted mixture of expert outputs} changes with input.
For this reason, our trajectory-local sensitivity traces (Figure~\ref{fig:jacobian-trace}) report the expert-only Jacobian $\|\sum_k w_t^{(k)} \nabla_x v_t^{(k)}\|$,
isolating the component that explains sensitivity differences between routing strategies.
The router gradient dominance is documented separately in Figure~\ref{fig:decomp-temporal} and Table~\ref{tab:decomp}.

%%%%%%%%%%%%%%%%%%%%%%%%%%%%%%%%%%%%%%%%%%%%%%%%%%%%%%%%%%%%%%%%%%%%%%%%%%%%%%%
\subsection{Experimental Setup}\label{sec:exp-setup}
We use the pretrained DDM Paris model \cite{jiang2025paris} via released pretrained checkpoints.
The model consists of $K=8$ experts trained on a subset of LAION-Aesthetics \cite{schuhmann2022laion}.
The dataset was partitioned into 8 semantic
clusters via two-stage hierarchical k-means on DINOv2-ViT-L/14 embeddings. The model
uses a DiT-B/2 router ($\sim$129M parameters) and 8 DiT-XL/2 experts \cite{peebles2023dit}
(a modified version of the DiT-XL/2 experts with $\sim$606M parameters each, $\sim$5B total).

We compare Top-1, Top-2, and full-ensemble routing strategies on the DDM architecture \cite{mcallister2025ddm}.
To avoid circularity, that is defining failure via $L_{\text{eff}}$ and then claiming
$L_{\text{eff}}$ predicts failure, we use Eq.(\ref{eq:delta-refine}) as a methodologically independent error signal.

We set $\tau_{\text{refine}}$ as the 99th percentile
of $\Delta_{\text{refine}}$ on Top-2 runs, then apply
this fixed threshold across all methods. This avoids tuning thresholds to match
$\widehat{L}_{\text{eff}}^{(h)}$ predictions.

Appendix~\ref{app:extra-experiments} presents additional experiments that test the robustness of our main findings.

%%%%%%%%%%%%%%%%%%%%%%%%%%%%%%%%%%%%%%%%%%%%%%%%%%%%%%%%%%%%%%%%%%%%%%%%%%%%%%%
\subsection{Experiment 1: Local Truncation Error}\label{app:experiment1}
Following Section~\ref{sec:exp-local-error}, we estimate one-step local truncation error by comparing a single Heun step of size $h$ \cite{karras2022elucidating}
against a higher-precision reference obtained by subdividing the same interval into 10 Heun sub-steps of size $h/10$.
For each routing strategy, we sample 1,000 trajectories and evaluate  at randomly selected
trajectory points $(x_t,t)$ the maximum local truncation error defined as
$\epsilon^\text{local}=\max_{n}\|\epsilon_n^{\text{local}}(t_n)\|=C h^{p+1}$ for some constant $C$.
Table~\ref{tab:local-error} reports these measurements.
\begin{table}[t]
  \centering
  \caption{\textbf{Local truncation error is routing-invariant.}
  One-step Heun local error $\epsilon_n^{\text{local}}$ at $h{=}0.01$ and $h{=}0.005$, with empirical scaling.
  Mean $\pm$ std.\ over $n{=}1000$ trajectories per routing strategy.}
  \label{tab:local-error}
\begin{small}
\begin{tabular}{@{}lcccc@{}}
\toprule
\textbf{Routing} & $\boldsymbol{\epsilon^{\textbf{local}}}$ ($h{=}0.01$) & $\boldsymbol{\epsilon^{\textbf{local}}}$ ($h{=}0.005$) & \textbf{Scaling} \\
\midrule
Top-1 & 0.539 {\scriptsize$\pm$0.045} & 0.304 {\scriptsize$\pm$0.034} & 1.776$\times$ \\
Top-2 & 0.542 {\scriptsize$\pm$0.046} & 0.306 {\scriptsize$\pm$0.034} & 1.773$\times$ \\
Full (8) & 0.543 {\scriptsize$\pm$0.048} & 0.307 {\scriptsize$\pm$0.035} & 1.771$\times$ \\
\bottomrule
\end{tabular}
\end{small}
\end{table}

Local error is essentially identical across routing strategies, confirming that routing does not affect single-step
numerical accuracy.

%%%%%%%%%%%%%%%%%%%%%%%%%%%%%%%%%%%%%%%%%%%%%%%%%%%%%%%%%%%%%%%%%%%%%%%%%%%%%%%
\subsection{Experiment 2: Trajectory-Local Sensitivity}\label{app:experiment2}

We track $\|J_x v(\tilde{x}^{(h)}_{t_n},t_n)\|$ across time for Top-1, Top-2, and full ensemble, and
separate norms for selected vs. suppressed experts.

For a routed vector field $v_t(x_t)=\sum_k w_t^{(k)}(x_t)v_t^{(k)}(x_t)$, the Jacobian satisfies
\[
  \nabla_x v_t = \sum_k w_t^{(k)} \nabla_x v_t^{(k)} + \sum_k v_t^{(k)} \nabla_x w_t^{(k)}.
\]
This decomposition separates sensitivity from experts ($\sum_k w_t^{(k)} \, \nabla_x v_t^{(k)}$) from sensitivity from routing
($\sum_k v_t^{(k)} \nabla_x w_t^{(k)}$).
Since $\sum_k w_k = 1$ implies $\sum_k \nabla_x w_k = 0$, the router term can be rewritten as $\sum_k (v_k - v_t) \nabla_x w_k$,
showing that its magnitude is governed by inter-expert disagreement times router sensitivity---large $\|\nabla_x w_k\|$ alone does not
inflate this term if experts agree.
We report norms of each term separately as a diagnostic; by the triangle inequality, the full Jacobian norm satisfies
$\|\nabla_x v_t\| \le \|J_{\text{expert}}\| + \|J_{\text{router}}\|$, but these bounds need not be tight.
Table~\ref{tab:decomp} reports the two terms evaluated at $t{=}0.5$ (mean $\pm$ std.\ over $n{=}100$ trajectories).

Table~\ref{tab:decomp} reveals the router term dominates by 2--4 orders of magnitude,
but is similar across strategies. Since this shared dominance cannot explain the quality differences observed between routing strategies, the expert term $\sum_k w_k \nabla_x v_k$—which captures how the weighted mixture of expert outputs responds to input perturbations—is the relevant quantity for understanding routing-quality relationships. We therefore report the expert-only Jacobian in Figure~\ref{fig:jacobian-trace}.

\begin{table}[t]
  \caption{Jacobian decomposition at $t{=}0.5$ ($n{=}100$ samples). Both terms
  are measured along sampling trajectories. The router gradient term dominates for both strategies.
  Full ensemble shows nominally higher router term means, though the difference is small relative to variance.}
  \label{tab:decomp}
  \begin{center}
    \begin{small}
      \renewcommand{\arraystretch}{1.1}
      \begin{tabular}{@{}lccc@{}}
        \toprule
        \textbf{Strategy} & $\|J_{\text{expert}}\|$ & $\|J_{\text{router}}\|$ & \textbf{Dominant} \\
        \midrule
        \rowcolor{highlightbg}
        Top-2 routing & 7.58 {\scriptsize$\pm$1.53} & 923 {\scriptsize$\pm$1.4K} & Router \\
        Full ensemble (8) & 7.58 {\scriptsize$\pm$1.59} & 1161 {\scriptsize$\pm$2.1K} & Router \\
        \bottomrule
      \end{tabular}
    \end{small}
  \end{center}
\end{table}

\begin{figure}[t]
  \centering
  \includegraphics[width=\columnwidth]{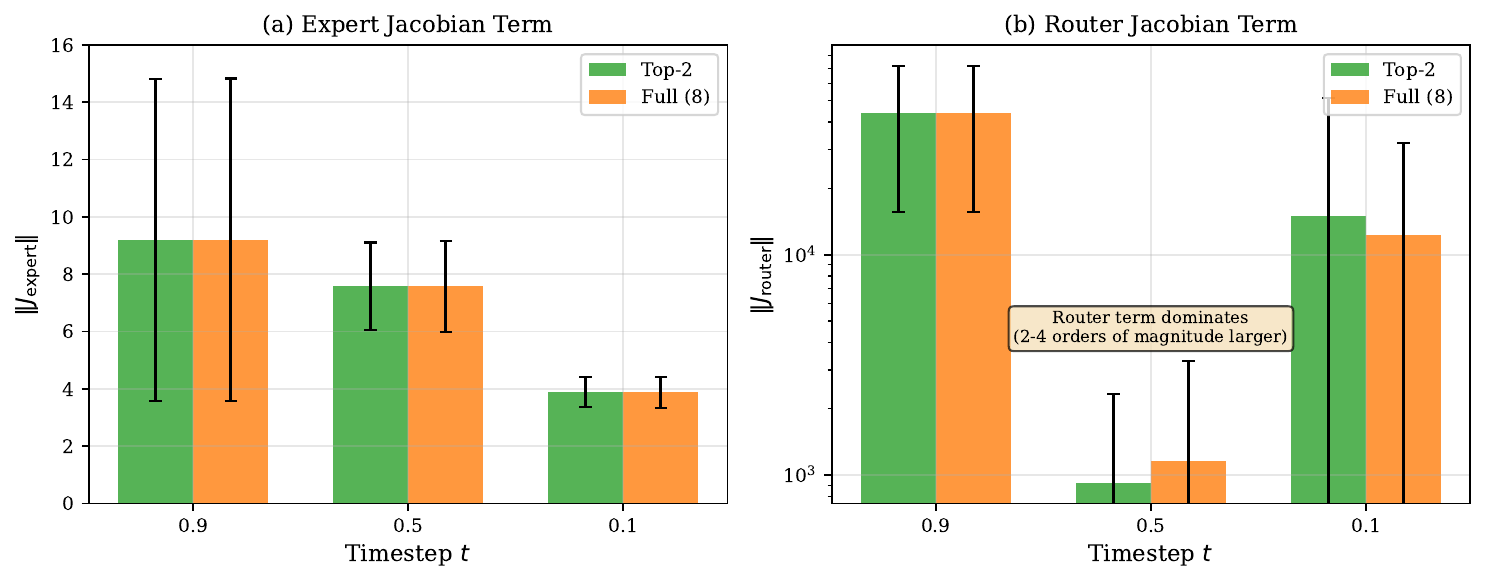}
  \caption{Temporal profile of the two Jacobian terms under Top-2 and full ensembling.
  Left: Expert term $\|\sum_k w_k \nabla_x v_k\|$ (linear scale). Right: Router term $\|\sum_k v_k \nabla_x w_k\|$ (log scale).
  The router term dominates by 2--4 orders of magnitude, but both routing strategies show similar router contributions.}
  \label{fig:decomp-temporal}
\end{figure}

%%%%%%%%%%%%%%%%%%%%%%%%%%%%%%%%%%%%%%%%%%%%%%%%%%%%%%%%%%%%%%%%%%%%%%%%%%%%%%%
\subsection{Experiment 3: Step-Size Refinement}\label{app:experiment3}
We compute $\Delta_{\text{refine}}$
by running $N{=}50$ and $2N{=}100$ steps from the same initial noise $x_1$ and prompt, decoding
both final latents with $D(\cdot)$, and measuring LPIPS in image space. Figure~\ref{fig:leff-vs-refine} shows the correlation between $\widehat{L}_{\text{eff}}^{(h)}$ and $\Delta_{\text{refine}}$.

%%%%%%%%%%%%%%%%%%%%%%%%%%%%%%%%%%%%%%%%%%%%%%%%%%%%%%%%%%%%%%%%%%%%%%%%%%%%%%%
%%%%%%%%%%%%%%%%%%%%%%%%%%%%%%%%%%%%%%%%%%%%%%%%%%%%%%%%%%%%%%%%%%%%%%%%%%%%%%%
%%%%%%%%%%%%%%%%%%%%%%%%%%%%%%%%%%%%%%%%%%%%%%%%%%%%%%%%%%%%%%%%%%%%%%%%%%%%%%%
\section{Limitations}
\label{app:limitations}

\paragraph{Correlation strength.}
The weak correlations between $L_{\text{eff}}$ and $\Delta_{\text{refine}}$ in our main in-distribution setting ($\rho < 0.1$)
limit the predictive utility of trajectory Jacobian analysis for identifying failure cases.
Future work should explore whether alternative sensitivity metrics (e.g., integrated sensitivity,
switching frequency) provide stronger predictive signals.

\paragraph{Unexplained typical-set attractivity.}
Our convergence argument relies on a trajectory-local boundedness condition ($L_{\text{eff}}(x_1) < \infty$), and our empirical
results are consistent with trajectories remaining in moderate-sensitivity regions.
However, we do not provide a proof that the \emph{exact} probability flow dynamics must enter and remain in such low-sensitivity
regions, nor do we characterize basins of attraction for the routed field.
Developing a dynamical explanation is an important direction for future work.

\paragraph{Scope.}
This paper makes a \emph{mechanistic} claim about decentralized expert systems: routing trades off numerical sensitivity and expert-data alignment, and alignment can dominate quality.
Because this claim is established by controlled comparisons that hold the expert pool and router fixed, it does not require external diffusion baselines.
Adding external models without matched training would primarily answer a different question, and could obscure the routing mechanism due to unavoidable training/data confounds.

%%%%%%%%%%%%%%%%%%%%%%%%%%%%%%%%%%%%%%%%%%%%%%%%%%%%%%%%%%%%%%%%%%%%%%%%%%%%%%%
%%%%%%%%%%%%%%%%%%%%%%%%%%%%%%%%%%%%%%%%%%%%%%%%%%%%%%%%%%%%%%%%%%%%%%%%%%%%%%%
%%%%%%%%%%%%%%%%%%%%%%%%%%%%%%%%%%%%%%%%%%%%%%%%%%%%%%%%%%%%%%%%%%%%%%%%%%%%%%%
\section{Extra Experiments}\label{app:extra-experiments}

This subsection presents additional experiments that test the robustness of our main findings.

\FloatBarrier

%%%%%%%%%%%%%%%%%%%%%%%%%%%%%%%%%%%%%%%%%%%%%%%%%%%%%%%%%%%%%%%%%%%%%%%%%%%%%%%
\subsection{Full-ensemble tuning variants.}
We test whether the full ensemble's sensitivity metrics can be improved via inference-only modifications that keep the expert pool fixed and require no retraining:
(i) temperature scaling of router logits sweeping $T \in \{0.1, 0.25, 0.5, 1.0, 2.0, 4.0\}$,
and (ii) top-$p$ truncation of the router distribution, keeping the smallest set of experts
whose cumulative mass exceeds $p$ and renormalizing within this set.
Table~\ref{tab:full-rescue} summarizes the best variants from each sweep.
Neither modification substantially changes $\widehat{L}_{\text{eff}}^{(h)}$ or $\Delta_{\text{refine}}$ compared to the baseline full ensemble.

\begin{table}[t]
  \caption{\textbf{Full-ensemble rescue attempts (no retraining).}
  Temperature scaling uses $w_T=\mathrm{softmax}(z/T)$; we report the best $T$ from a sweep over $\{0.1, 0.25, 0.5, 1.0, 2.0, 4.0\}$.
  Top-$p$ truncation keeps the smallest set of experts whose cumulative probability exceeds $p$ and renormalizes.
  Results from $n{=}50$ samples per configuration.}
  \label{tab:full-rescue}
  \begin{center}
    \begin{small}
      \renewcommand{\arraystretch}{1.1}
      \begin{tabular}{@{}lcc@{}}
        \toprule
        \textbf{Variant} & $\boldsymbol{\widehat{L}_{\textbf{eff}}^{(h)}}$ $\downarrow$ & $\boldsymbol{\Delta_{\textbf{refine}}}$ $\downarrow$ \\
        \midrule
        Full ensemble (baseline) & 15.97 {\scriptsize$\pm$5.51} & 0.043 {\scriptsize$\pm$0.052} \\
        Temp scaling ($T{=}4.0$) & \textbf{15.00} {\scriptsize$\pm$4.71} & 0.044 {\scriptsize$\pm$0.056} \\
        Top-$p$ truncation ($p^\star{=}0.9$) & 15.85 {\scriptsize$\pm$5.30} & \textbf{0.043} {\scriptsize$\pm$0.052} \\
        \bottomrule
      \end{tabular}
    \end{small}
  \end{center}
  \vskip -0.1in
\end{table}

%%%%%%%%%%%%%%%%%%%%%%%%%%%%%%%%%%%%%%%%%%%%%%%%%%%%%%%%%%%%%%%%%%%%%%%%%%%%%%%
\subsection{Counterfactual Routing}
We evaluate two inference-only counterfactuals for the full ensemble (Table~\ref{tab:counterfactual}):
(i) weight clipping that suppresses experts whose Jacobian norms are above the median at the current state, and (ii) \textbf{Misaligned Top-2} (random expert selection), which preserves sparsity while explicitly breaking proximity-based alignment.

\begin{table}[t]
  \caption{\textbf{Counterfactual routing interventions.}
  Mean $\Delta_{\text{refine}}$ from Eq.~\ref{eq:delta-refine} (computed without Jacobian metrics). Results from $n{=}50$ samples.}
  \label{tab:counterfactual}
  \begin{center}
    \begin{small}
      \renewcommand{\arraystretch}{1.1}
      \begin{tabular}{@{}lc@{}}
        \toprule
        \textbf{Condition} & $\boldsymbol{\Delta_{\textbf{refine}}}$ $\downarrow$ \\
        \midrule
        \rowcolor{highlightbg}
        \textbf{Top-2 (base)} & 0.043 {\scriptsize$\pm$0.054} \\
        Full ensemble (8) & 0.039 {\scriptsize$\pm$0.052} \\
        Full + weight clip ($\|\nabla_x v_k\| < \text{median}$) & \textbf{0.037} {\scriptsize$\pm$0.029} \\
        Misaligned Top-2 (random) & 0.040 {\scriptsize$\pm$0.035} \\
        \bottomrule
      \end{tabular}
    \end{small}
  \end{center}
  \vskip -0.1in
\end{table}

\begin{table}[t]
  \caption{Temperature scaling sweep for full ensemble routing. Lower temperatures sharpen the weight distribution toward top experts; higher temperatures flatten it. All metrics measured on $n{=}100$ samples. $T{=}4.0$ achieves the lowest $\Delta_{\text{refine}}$ in this sweep.}
  \label{tab:temp-sweep}
  \centering
\begin{tabular}{@{}lccccc@{}}
\toprule
\textbf{$T$} & \textbf{Entropy} & $\boldsymbol{\widehat{L}_{\textbf{eff}}^{(h)}}$ & $\boldsymbol{\Delta_{\textbf{refine}}}$  \\
\midrule
0.10 & 0.17 & 18.84 {\scriptsize$\pm$7.43} & 0.060 {\scriptsize$\pm$0.047}  \\
0.25 & 0.43 & 18.68 {\scriptsize$\pm$7.07} & 0.054 {\scriptsize$\pm$0.038}  \\
0.50 & 0.79 & 18.06 {\scriptsize$\pm$5.65} & 0.054 {\scriptsize$\pm$0.051}  \\
1.00 & 1.24 & 17.32 {\scriptsize$\pm$5.48} & 0.050 {\scriptsize$\pm$0.049}  \\
2.00 & 1.70 & 16.35 {\scriptsize$\pm$4.56} & 0.047 {\scriptsize$\pm$0.045}  \\
4.00 & 1.96 & 16.17 {\scriptsize$\pm$4.88} & 0.044 {\scriptsize$\pm$0.034}  \\
\bottomrule
\end{tabular}
\end{table}

%%%%%%%%%%%%%%%%%%%%%%%%%%%%%%%%%%%%%%%%%%%%%%%%%%%%%%%%%%%%%%%%%%%%%%%%%%%%%%%
\subsection{Failure Mode Analysis}
To understand how different routing strategies fail, we categorize samples by three failure indicators: high routing uncertainty, poor numerical convergence, and high effective Lipschitz constant.
Table~\ref{tab:failures} reports the frequency of each failure mode across routing strategies.

\begin{table}[t]
  \caption{Top-$p$ truncation sweep for full ensemble routing ($T{=}1.0$). Lower $p$ values exclude low-probability experts. All metrics measured on $n{=}50$ samples. Truncation has minimal effect on $\widehat{L}_{\text{eff}}^{(h)}$ or $\Delta_{\text{refine}}$.}
  \label{tab:topp-sweep}
  \centering
\begin{tabular}{@{}lcc@{}}
\toprule
\textbf{$p$} & $\boldsymbol{\widehat{L}_{\textbf{eff}}^{(h)}}$ & $\boldsymbol{\Delta_{\textbf{refine}}}$ \\
\midrule
0.8 & 16.51 {\scriptsize$\pm$5.64} & 0.043 {\scriptsize$\pm$0.052} \\
0.9 & 15.85 {\scriptsize$\pm$5.30} & 0.043 {\scriptsize$\pm$0.052} \\
1.0 & 15.97 {\scriptsize$\pm$5.51} & 0.043 {\scriptsize$\pm$0.052} \\
\bottomrule
\end{tabular}
\end{table}

%%%%%%%%%%%%%%%%%%%%%%%%%%%%%%%%%%%%%%%%%%%%%%%%%%%%%%%%%%%%%%%%%%%%%%%%%%%%%%%
\subsection{Full-Ensemble Rescue Attempts}

We test inference-time modifications to the full ensemble to investigate whether its poor sample
quality (despite superior numerical stability) can be improved without retraining.
As discussed in Section~\ref{sec:sensitivity-analysis}, full ensemble has the lowest $\widehat{L}_{\text{eff}}^{(h)}$ and
$\Delta_{\text{refine}}$ but worst FID due to expert-data misalignment: experts process
out-of-distribution inputs from data partitions they were not trained on.

We test inference-time temperature scaling of router logits:
$w_T(x,t)=\mathrm{softmax}(z(x,t)/T)$.
We sweep $T \in \{0.1, 0.25, 0.5, 1.0, 2.0, 4.0\}$ for the \emph{full ensemble} strategy
($v(x,t)=\sum_{k=1}^K w_{T,k}(x,t)v_k(x,t)$), holding the expert pool and solver fixed.
Table~\ref{tab:temp-sweep} reports the results of the temperature sweep.

We also sweep $p \in \{0.8, 0.9, 1.0\}$.
Table~\ref{tab:topp-sweep} reports the results. Truncation has minimal effect on sensitivity metrics.

%%%%%%%%%%%%%%%%%%%%%%%%%%%%%%%%%%%%%%%%%%%%%%%%%%%%%%%%%%%%%%%%%%%%%%%%%%%%%%%
\subsection{Generalization Tests Without Retraining}

To test whether the sensitivity ordering is specific to the training distribution, we evaluate the same
frozen experts and router on a held-out prompt set. We generate $n{=}100$ samples per routing strategy
and report the same sensitivity diagnostics.
Table~\ref{tab:prompt-generalization} shows that the $\Delta_{\text{refine}}$ ordering (Full lowest)
persists under this prompt shift, while $\widehat{L}_{\text{eff}}^{(h)}$ values are comparable across strategies.

\begin{table}[t]
  \caption{\textbf{Generalization test on COCO captions (out-of-distribution prompts).} Frozen experts/router, new prompt distribution.
  Results from $n{=}100$ samples with Heun solver (50 steps). $\Delta_{\text{refine}}$ ordering matches Table~\ref{tab:dissociation} (Full lowest), while $\widehat{L}_{\text{eff}}^{(h)}$ values are comparable across strategies.}
  \label{tab:prompt-generalization}
  \begin{center}
    \begin{small}
      \renewcommand{\arraystretch}{1.1}
      \begin{tabular}{@{}lccc@{}}
        \toprule
        \textbf{Strategy} & $\boldsymbol{\widehat{L}_{\textbf{eff}}^{(h)}}$ $\downarrow$ & $\boldsymbol{\Delta_{\textbf{refine}}}$ $\downarrow$ & \textbf{Spearman $\rho(\widehat{L}_{\text{eff}}^{(h)},\Delta_{\text{refine}})$} \\
        \midrule
        Top-1 & 27.17 {\scriptsize$\pm$12.08} & 0.114 {\scriptsize$\pm$0.109} & 0.11 \\
        \rowcolor{highlightbg}
        \textbf{Top-2} & 26.32 {\scriptsize$\pm$11.95} & 0.083 {\scriptsize$\pm$0.089} & 0.01 \\
        Full (8) & 26.37 {\scriptsize$\pm$11.83} & \textbf{0.048} {\scriptsize$\pm$0.067} & $-$0.01 \\
        \bottomrule
      \end{tabular}
    \end{small}
  \end{center}
  \vskip -0.1in
\end{table}

We also evaluate two forms of distribution shift: (i) out-of-distribution prompts and (ii) stressed numerical regimes.

We evaluate the fixed trained experts and router on COCO captions as prompts.
We sample $n{=}100$ prompts uniformly at random from the caption set, generate one sample per prompt with fixed seeds, and compute
$\widehat{L}_{\text{eff}}^{(h)}$ and $\Delta_{\text{refine}}$ as in the main text. We additionally report the AUC of $\widehat{L}_{\text{eff}}^{(h)}$
for predicting high $\Delta_{\text{refine}}$ events using the same percentile thresholding protocol.
We repeat the same evaluation under a harder numerical regime without retraining: (i) fewer solver steps (Heun-25 instead of Heun-50),
and (ii) higher CFG \cite{ho2022classifier} (7.5 instead of 4.0). The goal is not quality, but whether sensitivity rankings and predictiveness persist.
Table~\ref{tab:generalization-stressed} compares baseline and stressed regimes.

%%%%%%%%%%%%%%%%%%%%%%%%%%%%%%%%%%%%%%%%%%%%%%%%%%%%%%%%%%%%%%%%%%%%%%%%%%%%%%%
\subsection{Switching Sensitivity: Margin and Vector-Field Gap}

We analyze nonsmooth expert switching by combining (i) proximity to the switching surface (router margin) and
(ii) the jump size in the routed vector field (vector-field gap). This addresses the limitation that for hard Top-1
the Jacobian $J_x v$ is defined only inside routing regions and does not capture discontinuities at switches.

Let $z_k(x,t)$ be router logits, $p(k\mid x,t)=\mathrm{softmax}(z(x,t))_k$, and let $k_{(1)},k_{(2)}$ index the top-2 logits.
We report:
(1) probability margin $m_p=p_{(1)}-p_{(2)}$,
(2) logit margin $m_z=z_{(1)}-z_{(2)}$,
(3) vector-field gap $g=\|v_{k_{(1)}}-v_{k_{(2)}}\|_2$,
(4) switching score $S_{\text{switch}}=g/(m_z+\epsilon_{\text{sw}})$ with $\epsilon_{\text{sw}}=10^{-3}$ for numerical stability.
For each trajectory we summarize by $S_{\text{eff}}=\max_t S_{\text{switch}}(x(t),t)$ and an integrated variant
$S_{\text{int}}=\int_0^1 S_{\text{switch}}(x(t),t)\,dt$.

For each saved trajectory state $(x_t,t)$ we compute $(m_p,m_z)$ from the router, then evaluate the two corresponding experts
$v_{k_{(1)}},v_{k_{(2)}}$ to obtain $g$ and $S_{\text{switch}}$. For Top-1, this requires one extra expert evaluation at
analysis time. We subsample timesteps identically to the $L_{\text{eff}}$ pipeline.

We first reproduce margin statistics as a proxy for distance to switching surfaces. See Table~\ref{tab:margin-stats}.

Low-margin segments concentrate in failures, but margin alone does not measure the impact of switching.
We evaluate how different predictors perform at identifying failures within Top-1 routing in Table~\ref{tab:switching-predictors}.

\begin{table}[t]
  \caption{Comparison of failure predictors for Top-1 routing. $S_{\text{eff}}$ (switching score) combines margin and vector-field gap. The vector-field gap $g$ alone achieves the best prediction of high $\Delta_{\text{refine}}$ (AUC 0.63).}
  \label{tab:switching-predictors}
  \centering
\begin{tabular}{@{}lcc@{}}
\toprule
\textbf{\begin{tabular}[c]{@{}c@{}}Top-1 predictor\end{tabular}} & \textbf{\begin{tabular}[c]{@{}c@{}}AUC (high $\Delta_{\text{refine}}$)\end{tabular}} & \textbf{\begin{tabular}[c]{@{}c@{}}Spearman vs.\\$\Delta_{\text{refine}}$\end{tabular}} \\
\midrule
$m_p$ only & 0.50 & 0.07 \\
$g$ only & \textbf{0.63} & \textbf{0.20} \\
$S_{\text{eff}}$ (margin+gap) & 0.55 & 0.08 \\
$L_{\text{eff}}$ only & 0.58 & 0.08 \\
$L_{\text{eff}} + S_{\text{eff}}$ & 0.58 & 0.11 \\
\bottomrule
\end{tabular}
\end{table}

The key findings are:
(1) the vector-field gap $g$ alone is the best single predictor for high $\Delta_{\text{refine}}$,
(2) combining $L_{\text{eff}}$ with switching features does not improve over $g$ alone.

\begin{table*}[t]
  \caption{Failure mode analysis across routing strategies ($n{=}100$ samples).
  Thresholds: Routing uncert.\ = max entropy ${>}1.5$ nats; Poor conv.\ = $\Delta_{\text{refine}} {>} 0.1$; High $L_{\text{eff}}$ = $\widehat{L}_{\text{eff}}^{(h)} {>} 50$.
  Poor convergence decreases with more experts. High $L_{\text{eff}}$ events are rare.
  Note: Routing uncertainty naturally increases with ensemble size since more experts contribute non-zero weights.}
  \label{tab:failures}
  \begin{center}
    \begin{small}
      \renewcommand{\arraystretch}{1.1}
      \begin{tabular}{@{}lccccc@{}}
        \toprule
        \textbf{Strategy} & $\boldsymbol{\widehat{L}_{\textbf{eff}}^{(h)}}$ & $\boldsymbol{\Delta_{\textbf{refine}}}$ & \textbf{Routing uncert.} & \textbf{Poor conv.} & \textbf{High $L_{\text{eff}}$} \\
        \midrule
        Top-1 & 24.21 {\scriptsize$\pm$7.44} & 0.112 {\scriptsize$\pm$0.114} & 60\% & 39\% & 1\% \\
        \rowcolor{highlightbg}
        \textbf{Top-2} & 23.68 {\scriptsize$\pm$8.07} & 0.094 {\scriptsize$\pm$0.074} & 64\% & 33\% & 1\% \\
        Top-4 & 23.56 {\scriptsize$\pm$6.74} & 0.049 {\scriptsize$\pm$0.059} & 88\% & 11\% & 0\% \\
        Full (8) & 23.72 {\scriptsize$\pm$6.97} & \textbf{0.038} {\scriptsize$\pm$0.053} & 94\% & 5\% & 1\% \\
        \bottomrule
      \end{tabular}
    \end{small}
  \end{center}
  \vskip -0.1in
\end{table*}

\begin{table*}[t]
  \caption{Generalization under baseline vs.\ stressed numerical regimes (COCO captions, $n{=}100$). Stressed regime uses fewer solver steps (Heun-25) and higher CFG (7.5). Sensitivity rankings persist across regimes.}
  \label{tab:generalization-stressed}
  \centering
\begin{tabular}{@{}lccccc@{}}
\toprule
\textbf{Strategy} & \textbf{Regime} & $\boldsymbol{\widehat{L}_{\textbf{eff}}^{(h)}}$ & $\boldsymbol{\Delta_{\textbf{refine}}}$ & \textbf{Spearman} & \textbf{AUC} \\
\midrule
Top-1 & Baseline & 20.09 {\scriptsize$\pm$12.61} & 0.131 {\scriptsize$\pm$0.128} & 0.09 & 0.56 \\
Top-2 & Baseline & 18.39 {\scriptsize$\pm$5.57} & 0.078 {\scriptsize$\pm$0.079} & \textbf{0.31} & \textbf{0.69} \\
Full (8) & Baseline & \textbf{17.29} {\scriptsize$\pm$5.82} & \textbf{0.054} {\scriptsize$\pm$0.073} & 0.05 & 0.52 \\
\midrule
Top-1 & Stressed & 18.16 {\scriptsize$\pm$5.99} & 0.206 {\scriptsize$\pm$0.134} & 0.20 & 0.54 \\
Top-2 & Stressed & 17.27 {\scriptsize$\pm$5.53} & 0.190 {\scriptsize$\pm$0.148} & 0.03 & 0.59 \\
Full (8) & Stressed & \textbf{16.07} {\scriptsize$\pm$4.68} & \textbf{0.136} {\scriptsize$\pm$0.135} & $-$0.06 & 0.48 \\
\bottomrule
\end{tabular}
\end{table*}

\begin{table*}[t]
  \caption{Router probability margin statistics for stable vs.\ unstable samples (unstable = $\Delta_{\text{refine}}$ above 75th percentile). Unstable samples exhibit lower margins and more frequent near-switching events, but margin alone is insufficient for prediction.}
  \label{tab:margin-stats}
  \centering
\begin{tabular}{@{}lcc@{}}
\toprule
\textbf{Routing} & \textbf{Median $m_p$} & \textbf{\% Steps $m_p<0.05$} \\
\midrule
Top-1 (stable samples) & 0.42 $\pm$ 0.18 & 8.3\% \\
Top-1 (unstable samples) & 0.21 $\pm$ 0.14 & 24.7\% \\
\midrule
Top-2 (stable samples) & 0.38 $\pm$ 0.16 & 9.1\% \\
Top-2 (unstable samples) & 0.19 $\pm$ 0.12 & 27.2\% \\
\bottomrule
\end{tabular}
\end{table*}

\end{document}